\DeclareMathAlphabet{\mathpzc}{OT1}{pzc}{m}{it}
\newcommand{\cL}{\mathcal{L}}
\newcommand{\bbR}{\mathbb{R}}
\NewDocumentCommand{\norm}{mG{2}}{\big\|#1\big\|_{#2}}
\DeclareMathOperator{\diag}{diag}
\newcommand{\argmin}{\mathop{\rm argmin}}
\NewDocumentCommand{\seqp}{mG{n}}{{#1}_1-\cdots+ {#1}_{#2}}
\NewDocumentCommand{\seqm}{mG{n}}{{#1}_1-\cdots- {#1}_{#2}}
\newcommand{\myparagraph}[1]{\noindent\textbf{#1.}}
\newtheorem{lemma}{Lemma}
\theoremstyle{definition}
\theoremstyle{remark}
\newtheorem{remark}{Remark}
\definecolor{mygray}{rgb}{211,211,211}
\definecolor{cvprblue}{rgb}{0.21,0.49,0.74}
\definecolor{mypurple}{rgb}{200,200,235}
\newcommand{\cmark}{\ding{51}} 
\newcommand{\xmark}{\ding{55}} 
	\let\Cref\crtCref
	\let\cref\crtcref
\def\ourname{Gated KalmaNet\xspace}
\def\ourshortname{GKA\xspace}
\title{Gated KalmaNet: A Fading Memory Layer Through Test-Time Ridge Regression }
\author{Liangzu Peng$^{1,}$\thanks{Work done during an internship at AWS Agentic AI.} \quad Aditya Chattopadhyay$^{2,}$\thanks{Correspondence to achatto@amazon.com} \quad Luca Zancato$^{2}$ \quad Elvis Nunez$^{2}$ \quad Wei Xia$^{2}$ \quad Stefano Soatto$^{2}$\\
University of Pennsylvania$^1$ \quad AWS Agentic AI$^2$\\
{\tt\small lpenn@seas.upenn.edu }\\
{\tt\small \{achatto,zancato,elvisnun,wxia,soattos\}@amazon.com}}
\begin{document}
\maketitle


\begin{abstract}
 Linear State-Space Models (SSMs) offer an efficient alternative to softmax Attention with constant memory and linear compute, but their lossy, fading summary of the past hurts recall-oriented tasks. We propose \ourname (\ourshortname, pronounced "gee-ka"), a layer that accounts for the full past while retaining SSM-style efficiency. We ground our approach in the Kalman Filter (KF), and show that several existing SSM layers (DeltaNet, Gated DeltaNet, Kimi Delta Attention) are approximations to the KF recurrence under an identity error covariance assumption, which ignores how past keys and values should optimally influence state updates. In contrast, \ourshortname maintains the full error covariance and computes the exact Kalman gain. Under a steady-state assumption that enables parallelization, this reduces to an online ridge regression with constant memory and linear compute. The standard KF equations are numerically unstable in low-precision settings (e.g., bfloat16) and hard to parallelize on GPUs. We address this with (1) adaptive regularization via input-dependent gating to control the ridge regression's condition number, and (2) Chebyshev Iteration, which we show is more stable than conventional iterative solvers in low precision. We further develop hardware-aware chunk-wise kernels for efficient training. Empirically, \ourshortname outperforms existing SSM layers (e.g., Mamba2, Gated DeltaNet) on short-context tasks and achieves more than 10\% relative improvement on long-context RAG and LongQA up to 128k tokens. We further show \ourshortname outperforms Mamba when extended to ImageNet classification. Our code, including Triton kernels for training and inference (vLLM),\footnote{\url{https://github.com/awslabs/hybrid-model-factory}; \ourname is implemented as the \texttt{GKA} layer.} along with a model zoo of \ourshortname-based Hybrid models at 8B and 32B scale on HuggingFace,\footnote{\url{https://huggingface.co/collections/amazon/primed-hybrid-models-collection}} is released under Apache 2.0.

\end{abstract}

\section{Introduction}\label{section:intro}
Large Language Models (LLMs) powered by (softmax) Attention mechanisms \cite{Vaswani-NeurIPS2017} have revolutionized sequence modeling through their ability to form rich associations within their context window. However, a fundamental challenge that LLMs face is that their time complexity scales quadratically and storage grows linearly with their input length.

Recent years have seen intense efforts to develop Attention alternatives. Among them, memory layers based on linear State-Space models (SSMs) have grown popular for their linear-time computation and constant storage cost in the sequence length \cite{Dao-ICML2024-mamba2, Yang-ICML2024-gla}. These SSMs find inspirations from classic techniques in adaptive signal processing, and integrating them into modern SSMs leads to principled layer design and enhanced performance \cite{Liu-ICLR2025,Yang-NeurIPS2024, Zancato-NeurIPS2024-BMOJO}. However, pure SSM models still underperform Attention in many settings, especially on long-context tasks. This gap is a consequence of their different memory mechanisms: SSMs have a \textit{fading} fixed dimensional \textit{lossy} state of the past, while Attention has an \textit{eidetic} ever increasing 
KV-cache state \cite{Zancato-NeurIPS2024-BMOJO}.

To bridge this gap, we aim at designing a memory layer that enjoys the efficiency of linear SSMs while performing computation conditioned on the exact past. Towards this goal, we first draw insights from the \textit{Kalman filter} (KF) \cite{Kalman-1960}. In signal processing terms, KF computes the most recent state conditioned on all data seen thus far, and, under mild assumptions, KF is optimal in the \textit{Maximum A-Posteriori} (MAP) sense. In the LLM context, we use KF to update the state of an SSM layer and predict its output based on all past inputs. However, integrating KF into such a layer is non-trivial and faces two challenges:
\begin{itemize}
\item \textit{Parallelizable Training.} KF is an online  algorithm and needs to be parallelized to fully utilize modern hardware that is highly optimized for large-scale LLM training.
\item \textit{Numerical Stability.} KF involves matrix inversion, which can be numerically unstable in low precision arithmetic. 
\end{itemize}
In this work, we propose \textit{Gated KalmaNet} (GKA), a memory layer that incorporates KF into its design and is both numerically stable and trainable on highly parallelizable hardware. 
We start by observing that the KF recursion solves a test-time ridge regression problem. Then, to solve such a regularized problem stably, we make the following choices:
\begin{itemize}
    \item At the modeling level, we adaptively choose the regularization strength of our test-time objective function based on the Frobenius norm of the regularized data covariance. With this choice we can easily upper bound the condition number of the optimization problem. 
    \item At the algorithmic level, we note that exact solvers (e.g., \texttt{torch.linalg.solve}) are hard to parallelize (in a chunk-wise manner), so we resort to the classic \textit{Chebyshev Iteration} (CH), which we show has high numerical accuracy and fast convergence compared with alternatives such as (accelerated) gradient descent and conjugate gradient. 
\end{itemize}
To make \ourshortname scalable and efficient, we implement CH with adaptive regularization in Triton in a hardware-aware, chunk-wise manner. Our technical novelty here includes deriving a chunk-wise implementation that back-propagates through the Frobenius norm, for which the difficulty is the presence of a \textit{nested} recurrence. 
Furthermore, we combine CH with a \textit{gating mechanism} that decides the regression residual weights in an input-aware and time-varying fashion, enhancing the contribution of recent inputs and smoothly fading out distant contexts. Overall, to the best of our knowledge, this is a first adoption of the CH method for training sequence modeling layers in LLMs stably at scale.

Empirically, thanks to \ourshortname's computations conditioned on the entire past, it consistently outperforms state-of-the-art linear SSMs, including Mamba2 \cite{Dao-ICML2024-mamba2} and (Gated) DeltaNet  \cite{Yang-NeurIPS2024,Yang-ICLR2025}, across Synthetic Recall (MQAR \cite{arora2023zoology}), short-context (LM-Harness \cite{eval-harness}), and long-context benchmarks (RULER \cite{hsieh2024ruler}, HELMET \cite{yen2025helmet}). On real-world long-context tasks such as Retrieval-Augmented Generation and Long Question-Answering at 128k tokens, \ourshortname improves upon SSM baselines by at least 10\%. Beyond language, GKAVision outperforms MambaVision on ImageNet classification, suggesting that \ourshortname's recall advantages transfer across modalities.

\section{Prior Work and Preliminaries}
In this section we briefly review prior work and preliminaries that will set the stage for motivating our work. For a more detailed exposition of related work see \cref{appendix:related work}. 

\myparagraph{(Softmax) Attention} At each time $t$, Attention \cite{Vaswani-NeurIPS2017} linearly projects the $t$-th input token to obtain three vectors, named  \textit{query} $q_t$, \textit{key} $k_t$, \textit{value} $v_t$ respectively. Then, it outputs a vector $y_t \in \bbR^D$ as a convex combination of all values seen so far, with coefficients $c_1,\dots,c_t$ given by inner products of the current query $q_t$ with all seen keys and a softmax mapping:
\begin{align}\label{eq:attn}
    y_t =\sum_{i=1}^t  c_i v_i,\quad \quad c_i:= \frac{ \exp(\frac{k_i^\top q_t}{\sqrt{D}}) }{\sum_{i=1}^t \exp(\frac{k_i^\top q_t}{\sqrt{D}}) } . \tag{Attn}
\end{align}
From an optimization perspective, \cref{eq:attn} can be viewed as solving the following \textit{regression}  objective\footnote{Concretely, for keys and queries of unit norm, Attention is precisely the \textit{Nadaraya-Watson estimator} \cite{Nadaraya-1964,Watson-1964} with the Gaussian kernel to approximate the conditional expectation of the value given a query; cf. \cite{Chaudhari-TIST2021,Vidal-2022}.\label{footnote:kernelreg}},
\begin{align}\label{eq: attn as regression}
    y_t = \argmin_{v}  \sum_{i = 1}^t\exp\left(\frac{k_i^\top q_t }{\sqrt{D}} \right) \cdot \| v - v_{i} \|_2^2. 
\end{align}
The success of \cref{eq:attn} is often attributed to its ability to perform verbatim retrieval of relevant context from the entire past. Here, the past refers to  the entire key-value pairs observed thus far, also known as the \textit{KV-cache}, which grows linearly with time $t$. Moreover, the computation is also linear at each time $t$, and doing so for all $t$ results in a quadratic time complexity. This high computation and storage cost of Attention makes its use prohibitive in long context scenarios.

\myparagraph{Linear State-Space Models (SSMs)} The high computation cost of \cref{eq:attn} has motivated a flurry of work developing new LLM layers, like SSMs, with linear rather than quadratic cost. 
Most SSMs maintain a \textit{state matrix} $S_t\in \bbR^{D\times D}$ and update it at each time step via a linear recursion of the form
\begin{align}\label{eq:linear-attn}
    S_t = \gamma_t \cdot S_{t-1} + \beta_{t} \cdot  v_t k_t^\top, \quad y_t = S_t q_t, \tag{Linear-SSM}
\end{align}
where $\gamma_t,\beta_t$ are typically in $[0,1]$. 
Unlike the verbatim lookup of \cref{eq:attn}, here \cref{eq:linear-attn} essentially compresses the entire KV-cache into a fixed-dimensional representation $S_t$. Subsequent computation of the output $y_t$ relies on $S_t$ and no longer on the exact past. This results in a constant cost of storage and computation at every timestep.

In many linear SSMs (e.g., RetNet \cite{Sun-arXiv2023RetNet}, Mamba2 \cite{Dao-ICML2024-mamba2}), the use of $\gamma_t$ and $\beta_t$ is often heuristic and finds inspirations from nonlinear recurrent neural networks \cite{Hochreiter-NC1997}; in that light, $\gamma_t$ and $\beta_t$ are called \textit{forgetting} and \textit{input} gates, respectively. This basic form of \cref{eq:linear-attn} has been generalized by replacing $\gamma_t$ with a diagonal matrix (GLA \cite{Yang-ICML2024-gla}, RWKV-6 \cite{Peng-CoLM2024}, Longhorn \cite{Liu-ICLR2025}) or \textit{low-rank-plus-identity} matrix (Gated DeltaNet \cite{schlag2021linear,Yang-NeurIPS2024,Yang-ICLR2025}, DeltaProduct \cite{Siems-arXiv2025}, RWKV-7 \cite{Peng-arXiv2025rwkv}). 

Similarly to that of \cref{eq:attn} the case with \textit{low-rank-plus-identity} matrices can often be justified from an optimization perspective. For example,  Gated DeltaNet  \cite{schlag2021linear,Yang-NeurIPS2024} updates the state via ($I$ is the $D\times D$ identity matrix)
\begin{align}\label{eq:GDN}
    S_t = \gamma_t \cdot S_{t-1} \left(I - \beta_t k_t k_t^\top\right) + \beta_t \cdot v_t k_t^\top, \tag{GDN} 
\end{align}
which can be viewed as applying one gradient descent step with step-size $\beta_t$ and initialization $\gamma_tS_{t-1}$ to the objective 
\begin{align}\label{eq:GDN-obj}
    \min_{S} \| S k_t - v_t \|_2^2.
\end{align}
The objectives of \cref{eq:GDN} \cref{eq:GDN-obj} and \cref{eq:attn} \cref{eq: attn as regression} are prime examples that expose a general distinction between linear SSMs and \cref{eq:attn}: The former updates its state based on a regression objective that considers only the previous \textit{lossy} state and the current time step, whereas the latter uses the entire, exact KV-cache to solve its regression objective \cref{eq: attn as regression}. 

We hypothesize this myopic view of SSM objectives results in their lower performance and limited long-context abilities. We then ask: \textit{What is an objective or, equivalently, a recursion that considers the entire past as \cref{eq:attn} while still being solvable in linear time as in \cref{eq:linear-attn}?}

\section{A Linear SSM Inspired by the Kalman Filter}
In \cref{subsection:KF-motivation} we show how the  Kalman Filter (KF) gives insights into a new linear SSM layer that takes all past time instants into account. In \cref{subsection:hurdle} we explain the numerical and efficiency challenges of building such a layer.

\subsection{Motivation from Kalman Filter}\label{subsection:KF-motivation}
KF is an established online approach that takes the exact past into account to optimally solve a \textit{weighted ridge regression} objective (e.g., see \cite[Proposition 2 \& Lemma 3]{Peng-MoCL2025}). In our context, this means that the \textit{optimal} state 
\begin{align}\label{eq:rls}
    \begin{split}
        S_{t} &= \argmin_{ S\in\bbR^{D\times D} } \lambda \cdot \| S \|_{\text{F}}^2 + \sum_{i=1}^t \eta_{i}\cdot \| S k_{i} - v_{i}\|_2^2
    \end{split} 
\end{align}
can be computed by the KF recursion
\begin{align}\label{eq:kf-update}
    S_{t} = S_{t-1} - \frac{(S_{t-1} k_t -v_t) k_t^\top \Phi_{{t}-1}}{1/\eta_{t} + k_t^\top \Phi_{{t}-1}k_t},  \tag{KF}
\end{align}
where $\eta_t$ is the weight for the $t$-th key-value pair, and $\Phi_{t-1}$ is the Hessian inverse of \cref{eq:rls} at time $t-1$  ($\Phi_{t-1}$ itself can be continually updated via the \textit{Woodbury matrix identity}). It is now clear that objective \cref{eq:rls} takes the entire KV-cache into account, similarly to \cref{eq:attn}. It is also clear that \cref{eq:kf-update} is an efficient update scheme similarly to \cref{eq:linear-attn}; indeed, \cref{eq:kf-update} is also a \textit{low-rank-plus-identity} form (cf. \cref{eq:GDN}). 

While \cref{eq:linear-attn} relies on instantaneous objectives akin to \cref{eq:GDN-obj} (cf. \cite[Table 2]{Yang-NeurIPS2024}), \cref{eq:kf-update} leverages second-order information from $\Phi_{t-1}$ to solve \cref{eq:rls} optimally. It is in this sense that we say \cref{eq:kf-update} is more expressive than other \cref{eq:linear-attn} or \cref{eq:GDN}. We now detail the differences in the objectives of \cref{eq:kf-update} and \cref{eq:attn}:

\begin{itemize}
    \item \cref{eq:kf-update} computes a parametric linear estimator that enables a constant-sized memory, while \cref{eq: attn as regression} computes a non-parametric point estimate that entails storing the full cache.
    \item In \cref{eq: attn as regression}, the weights of the same residual vary over time as the queries differ at each time, while in \cref{eq:rls} $i$-th weight $\eta_i$ is constant once observed at time $i$. The former results in quadratically many weights---thus a quadratic time complexity---and the latter linearly many.    
    \item In \cref{eq:rls}, the regularizer $\lambda \cdot \| S \|_{\text{F}}^2$ prevents overfitting our state to key-value pairs, as only a finite amount of ``information" can be stored in a constant-sized memory beyond which will result in ``fuzzy" recall. In this light, $\lambda$ can be thought of as controlling the memorization ``capacity" of the state.
\end{itemize}


\subsection{Hurdles Towards Scalable Kalman Filter SSMs}\label{subsection:hurdle}
Despite its optimality and (sequential) efficiency
the \cref{eq:kf-update} recursion lacks a hardware-aware implementation that leverages parallelism in modern Tensors Cores. Moreover, for long sequences it can lose numerical precision due to division (and due to how $\Phi_{t}$ is updated). The final hurdle is conceptual: Fixing weights $\eta_i$ and regularization $\lambda$ over time as in \cref{eq:rls} might make a layer less expressive.

We are aware of the use of \cref{eq:kf-update} or \cref{eq:rls} in neural network training three decades ago \cite{Shah-1992} or in deep continual learning recently \cite{Zeng-NMI2019,Mcdonnell-NeurIPS2023,Peng-ICLR2025}. We are also aware of the recent mentioning of \cref{eq:rls} or efforts towards solving it, which go by the name \textit{test-time optimization} \cite{Wang-arXiv2025v3,Von-arXiv2025-mesa}. However, to the best of our knowledge, none of the prior work has fully addressed the above hurdles that need to be solved to design an 
SSM layer that is trainable in parallel, numerical well-behaved, and sufficiently expressive. 
In particular, both \cite{Von-arXiv2025-mesa} and \cite{Wang-arXiv2025v3} have overlooked a basic numerical concern: The worst-case numerical error in solving   \cref{eq:rls} can be $\epsilon\cdot \kappa$ \cite{Golub-2013}, where $\kappa$ is the condition number of the Hessian in \cref{eq:rls} and $\epsilon$ the machine precision; since $\epsilon\approx0.007$ (bf16), \cref{eq:rls} has to be regularized \textit{strongly} for $\kappa$ and the worst-case error to be small, regardless of algorithmic choices to solve \cref{eq:rls}. Indeed, the regularization  enforced in \cite{Von-arXiv2025-mesa} sets $\lambda$ to be lower bounded by $0.25$, but this is not sufficient: Their $\kappa$ is as large as $500$ \cite[Fig. 13]{Von-arXiv2025-mesa}, implying a worst-case error of $3.5$ (The implementation of \cite{Von-arXiv2025-mesa} available on GitHub is numerically vulnerable; we failed to train it without NaNs in various settings.). Also, the  regression objective in \cite{Wang-arXiv2025v3} has no regularization, which makes it numerically ill-posed for low-precision training.

\section{\ourname (\ourshortname)}
\label{sec: GKA}
We propose \textit{\ourname} (\ourshortname) to address the above hurdles: We enhance numerical stability via adaptive regularization and the classic \textit{Chebyshev Iteration} (CH), increase expressivity of KF via a standard gating mechanism, and improve parallelism via a hardware-friendly implementation.

\subsection{CH with Adaptive Regularization \& Weighting}\label{subsection:CH-adaptive-reg}
\myparagraph{Motivation} As alluded earlier, solving \cref{eq:rls} via \cref{eq:kf-update} is sequential in nature, and here we consider alternatives amenable to parallelizable training. Our first step towards this is to write down a closed form solution to \cref{eq:rls} and compute the output
\begin{align*}
    y_t = S_t q_t = \left( \sum_{i=1}^t \eta_i v_ik_i^\top \right) \left(\sum_{i=1}^t \eta_i k_i k_i^\top + \lambda I \right)^{-1} q_t.
\end{align*}
With the weighted covariances $U_t:= \sum_{i=1}^t \eta_i v_ik_i^\top$ and $H_t:=\sum_{i=1}^t \eta_i k_i k_i^\top$, we note that $y_t$ can be computed via first solving $(H_t + \lambda I) x = q_t$ for $x$ and then left-multiplying $U_t$. An exact solver (e.g., \texttt{torch.linalg.solve}) can do so with high accuracy by parallelizing over the batch dimension. However, it is inefficient for two reasons. First, it takes $O(D^3)$ time for every $t$. Second, it requires explicitly forming and materializing all $H_t$'s, which entails large I/O costs. In light of this, we resort to first-order iterative methods. These methods use matrix-vector products to enable chunk-wise parallelism over batches without materializing all $H_t$'s. Furthermore, they often take $O(D^2)$ time complexity per iteration and can converge quickly in a few iterations. The iterative method we choose is the \textit{Chebyshev Iteration} (CH);  we proceed to describe its basic idea, with the justification for using CH deferred to \cref{subsection:solver-comparison}.

\myparagraph{Chebyshev Iteration (CH)} CH is an \textit{accelerated gradient descent} method (AGD) that applies \cref{eq:ch-gd} and \cref{eq:ch-momentum} to the quadratic objective $\frac{1}{2} \xi^\top H \xi - \xi^\top q$, that is to solve $H \xi = q$  (\cref{algo:Chebyshev}). Different from vanilla AGD, CH incorporates a \cref{eq:weight-update} and makes specific choices of different parameters; these choices make CH optimal with the fastest convergence among first-order methods \cite{Pedregosa-Chebyshev2021}. 

We replace the above exact solver with CH: 
\begin{align*}
    \hat{x}_t = \text{CH}(H_t+\lambda I, q_t, r),  \quad \quad 
    y_t= U_t \hat{x}_t.
\end{align*}
Here, $\text{CH}(H_t+\lambda I, q_t, r)$ means $r$ iterations of CH to approximately solve $(H_t+\lambda_t I) x = q_t$. To improve stability and expressivity, next we allow regularization $\lambda$ and weight $\eta_i$ to be time-varying and chosen adaptively. We write $\lambda_t$ and $\eta_{t,i}$ to make their dependency in time $t$ explicit, with $\eta_{t,i}$ being the weight of the $i$-th token at time $t$. 

\begin{algorithm}[t]
    \SetAlgoLined
    \DontPrintSemicolon
    \texttt{Input}: $H\in \bbR^{D\times D},q\in \bbR^D$, eigenvalue bounds $L,\mu$ with $L\geq\mu>0$, number of iterations $r$;

    \texttt{Initialize}: $\rho \gets \frac{L-\mu}{L+\mu}$; $\omega_0=2$; the first two iterates $\xi_{-1} \gets 0, \xi_0 \gets \frac{2q}{L+\mu}$;
    
    \texttt{For Loop ($i=1,\dots,r$)}:
    \vspace{-0.2cm}
    \begin{align}
        \omega_i &\gets \frac{4}{4 - \rho^2 \omega_{i-1}} \tag{weight schedule}  \label{eq:weight-update} \\ 
        \xi_{i} &\gets \xi_{i-1} - \frac{2 \cdot \omega_i }{L+\mu}  (H \xi_{i-1} - q) \tag{grad descent} \label{eq:ch-gd} \\ 
        \xi_{i} &\gets \xi_i + (\omega_i - 1 ) ( \xi_{i-1} - \xi_{i-2})  \tag{momentum} \label{eq:ch-momentum}
    \end{align}
    
    \texttt{Output}: $\xi_{r}$
    
    \caption{Chebyshev Iteration to solve $H\xi = q$  } 
    \label{algo:Chebyshev}
\end{algorithm}

\myparagraph{Adaptive Regularization} As mentioned, the condition number $\kappa_t$ of $H_t +\lambda_t I$ has to be controlled for any method to be numerically stable.
We choose $\lambda_t$ to be proportional to the Frobenius norm $\| H_t \|_{\text{F}}$, that is to set $\lambda_t= a \cdot \| H_t \|_{\text{F}}$ for some constant $a>0$. An upper bound on $\kappa_t$ now ensures: 
\begin{align}\label{eq:kappa-ub}
    \kappa_t = \frac{\lambda_{\text{max}}(H_t) + \lambda_t}{\lambda_{\text{min}}(H_t) + \lambda_t} \leq \frac{\| H_t \|_{\text{F}} + \lambda_t }{ \lambda_t }= \frac{a+1}{a}.
\end{align}
Here $\lambda_{\text{max}}(H_t), \lambda_{\text{min}}(H_t)$ are the maximum and minimum eigenvalues of $H_t$, respectively. Given this choice of $\lambda_t$, we set $L=\| H_t \|_{\text{F}} + \lambda_t $ and $\mu=\lambda_t$ for \cref{algo:Chebyshev}.

\myparagraph{Adaptive Weighting (Gating)} We use weights $\eta_{t,i}$ that are exponentially decaying in time: For all $t\geq i$, we parameterize $\eta_{t,i} = \prod_{j = i+1}^t \gamma_j$, with each $\gamma_j \in [0,1]$ learnable. The fading weights encode the ``prior" of \textit{recency bias} that has been shown to exist in LLMs \cite{fang2025large} without even explicitly computing the weights from the query-key dot products as in \cref{eq:attn}. Similarly to \cref{eq:attn}, the weights on the residuals are now time-varying, but differently, the exponentially decay parameterization allows for linear-time implementation.

\myparagraph{Forward Recurrence} We now summarize our recurrence which arms CH with adaptive regularization and weighting: 
\begin{equation}\label{eq:ch-forward}
    \begin{split}
        H_t&= \gamma_t \cdot  H_{t-1} + k_t k_t^\top, \ \  U_t = \gamma_t \cdot U_{t-1} + v_t k_t^\top, \\
        y_t &= U_t \hat{x}_t, \quad \hat{x}_t = \text{CH}(H_t+\lambda_t I, q_t, r).
    \end{split} \tag{CH}
\end{equation}

\paragraph{Remark (canonical \(\beta_t\)-augmented variant).}
Subsequent work~\citep{chattopadhyay2026priming} augments the recurrence above
with a learned input-selectivity gate \(\beta_t \in [0,1]\) that modulates
the per-token write into both \(H_t\) and \(U_t\):
\[
H_t = \gamma_t \cdot H_{t-1} + \beta_t\, k_t k_t^\top,
\qquad
U_t = \gamma_t \cdot U_{t-1} + \beta_t\, v_t k_t^\top.
\]
The \(\beta_t \equiv 1\) variant studied in this paper is recovered as a
special case. All experiments in this paper use \(\beta_t \equiv 1\); the
\(\beta_t\)-augmented variant consistently improves long-context
performance, with gains widening at longer contexts (see
\citep[Section 8.1]{chattopadhyay2026priming} for motivation and ablations).
The released \ourshortname implementation adopts the \(\beta_t\)-augmented form
by default, and we recommend it as the canonical \ourshortname layer going forward.

\subsection{Chunk-wise Implementation}\label{section:CH}
Here, we describe our implementation for the forward + backward passes for \cref{eq:ch-forward}. For more details, see \cref{appendix: forward+backward CH}.

\subsubsection{Forward Pass}


Similarly to \cite{Dao-ICML2024-mamba2,Yang-ICML2024-gla,Yang-NeurIPS2024}, we now describe a \textit{chunk-wise} implementation for \cref{eq:ch-forward}. In \cref{eq:ch-forward}, given $U_t$ and $\hat{x}_t$, computing $y_t=U_t \hat{x}_t$ in a chunk-wise fashion is similar to that of \cref{eq:linear-attn}; also similar is the calculation of $H_t \xi_{i-1}$ as needed in \cref{eq:ch-gd}. For these we refer the reader to \cite{Yang-ICML2024-gla,Yang-NeurIPS2024} for details. Our algorithmic novelty here is a chunk-wise computational formula for $\| H_t\|_{\text{F}}$, presented next.

Let $T$ be the sequence length and $C$ the chunk size such that $N:=T/C$ is an integer. For $t=0,\dots,N-1$, write $[t]:=tC$. The core idea of a chunk-wise implementation is as follows. First, we compute and store the \textit{initial state} $H_{[t]}$ of every chunk. This gives us implicit access to $H_{[t]+c}$ via unrolling the recurrence of $H_t$ for $c$ steps and therefore allows us to carry out computation with $H_{[t]+c}$; for example, we can compute the matrix-vector product $H_{[t]+1} \xi$ via $H_{[t]}\xi + \gamma_1 k_{[t]+1}k_{[t]+1}^\top \xi$. This is without forming $H_{[t]+1}$ explicitly, thereby reducing the number of states to materialize on chip. To implement such a scheme, we need to precompute all $H_{[t]}$'s sequentially, and then do the computation with parallelism over chunks and within each chunk. 

We now make this idea precise for computing all $\| H_t \|_{\text{F}}$'s within a chunk. 
Since the computation of each chunk is the same, we simplify by working with the first one where we have access to initial state $H_0$,  gates $\gamma_1,\dots,\gamma_C$, keys $K_C=[k_1,\dots,k_C]\in\bbR^{D\times C}$,and we aim to compute $\| H_1 \|_{\text{F}},\| H_2 \|_{\text{F}}, \dots, \| H_C \|_{\text{F}}$. With these notations, we first compute the $C$-dimensional vector $\zeta=[\zeta_1,\dots,\zeta_C]^\top$ of cumulative products of $\gamma_i$'s, with $\zeta_{c}= \prod_{i=1}^c \gamma_{i}$. Then, form the $C\times C$ upper triangular matrix  $M$ whose ($i,j$)-th entry $M_{j,c}$ is $\zeta_{c} / \zeta_{j}$ ($\forall c\geq j$). Now, unroll the recurrence of $H_c$:
\begin{align*}
    H_{c} &= \zeta_{c} H_{0} + \sum_{j=1}^c M_{j,c} k_{j} k_{j}^\top =\zeta_{c} H_{0} + \sum_{j=1}^C M_{j,c} k_{j} k_{j}^\top. \\ 
\end{align*}
Expanding $\| H_{c} \|_{\text{F}}^2$ gives  the following sum of three terms:
\begin{align*}
    \zeta_{c}^2 \| H_{0} \|_{\text{F}}^2 + 2\zeta_c \sum_{j=1}^C M_{j,c}  k_{j}^\top H_0 k_{j}  +  \Big\| \sum_{j=1}^C M_{j,c} k_{j} k_{j}^\top \Big\|_{\text{F}}^2.
\end{align*}
With $\zeta$, the first term $\zeta_{c}^2 \cdot \| H_{0} \|_{\text{F}}^2$ is easily computed in parallel for all $c$. For the second term, we first compute the vector of quadratic forms $k_j^\top H_0 k_j$ for all $j$ in parallel, broadcast it and multiply it with $M$ element-wise, sum over each column, and multiply the result with $2\zeta$ element-wise. Finally, with Gram matrix $G_C:=K_C^\top K_C$, one verifies the third term can be computed in parallel for all $c$ via the following pseudocode:
\begin{align}
   \text{column-sum} (( (G_C \odot G_C) M  ) \odot M).
\end{align}
Here $\odot$ denotes element-wise multiplication and the sum is over each column. Summing the three terms and taking the square root, we obtain $\| H_1 \|_{\text{F}}, \dots, \| H_C \|_{\text{F}}$, as desired.




\subsubsection{Backward Pass}\label{eq:CH-backward-mainpaper}
\myparagraph{Motivation} Typically, the backward pass is done automatically via \texttt{torch.autograd}. However, for iterative methods such as CH (\cref{algo:Chebyshev}), \texttt{torch.autograd} would store some \textit{activations} or intermediate iterates, entailing large storage cost. While in principle we can back-propagate through CH without storing any intermediate activations or iterates (by our trick of \textit{reverting the CH iterations}, cf. \cref{subsection:CH-BP-details}), under this trick it is difficult to compute all the gradients in a chunk-wise fashion. Therefore, we resort to the \textit{implicit differentiation} trick, which is practically efficient and chunk-wise implementable, for backpropagation through the linear equations that CH approximately solves.

\myparagraph{Implicit Differentiation} We derive the backward pass for our method with the standard implicit differentiation trick. It assumes we find an exact solution $x^*_t$ to the equations $(H_t+\lambda_t I)x = q_t$. In the backward pass, we are given the gradient $dx^*_t:= \frac{d \cL}{d x^*_t}$ of some loss function $\cL$, and need to compute the corresponding gradients at $q_t, k_t,\gamma_t$. For example, via the chain rule we obtain $d q_t := \frac{d \cL}{d q_t}$ via 
\begin{align}\label{eq:exact-dq-ridge}
    d q_t= (H_t+\lambda_t I)^{-1}d x^*_t,
\end{align}
that is to solve linear equations similarly to the forward pass. Since the forward pass computes an approximate solution $\hat{x}_t$ via CH, we receive an approximate up stream gradient $d \hat{x}_t$ (not exactly $dx_t^*$). Thus we employ CH to obtain an approximate gradient $d\hat{q}_t = \text{CH}(H_t+\lambda_t I, d \hat{x}_t, r)$. See \cref{table:implicit-diff} for the full forward and backward equations under implicit vs. exact differentiation.

\myparagraph{Backward Recurrence} Besides $dq_t$, we need to compute $dH_t$ from which we obtain $dk_t$ and $d\gamma_t$ via the chain rule. We describe $d\gamma_t$ in the Appendix. Here we analyze $dk_t$:
\begin{lemma}\label{lemma:dk_i}
    With $\lambda_t=  a \| H_t \|_{\text{F}}$, $w_t=\frac{ 2a  \cdot (x_t^*)^\top  dq_t}{\| H_t \|_{\text{F}} } $, we have
    \begin{align}\label{eq:dk_i-main} 
        dk_i &=  \sum_{t\geq i} M_{i,t} \left(  -dq_t (x_t^*)^\top -  x_t^* dq_t^\top-  w_t H_t \right) k_i.
    \end{align}
\end{lemma}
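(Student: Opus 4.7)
The plan is to start from the implicit differentiation identity for the exact ridge solution and then unroll the $H_t$ recurrence so that every occurrence of $k_i$ is explicit. Recall that $(H_t+\lambda_t I)x_t^*=q_t$, so the upstream gradient $dx_t^*$ satisfies $dq_t=(H_t+\lambda_t I)^{-1}dx_t^*$ by \cref{eq:exact-dq-ridge}. Hence for any perturbation $\delta k_i$ that induces perturbations $\delta H_t$ and $\delta \lambda_t$, we have
\begin{equation*}
\delta x_t^* = -(H_t+\lambda_t I)^{-1}\bigl(\delta H_t + (\delta\lambda_t) I\bigr)x_t^*,
\end{equation*}
and therefore the contribution of time $t$ to $d\mathcal{L}$ is
\begin{equation*}
(dx_t^*)^\top \delta x_t^* = -dq_t^\top\bigl(\delta H_t + (\delta\lambda_t) I\bigr)x_t^*.
\end{equation*}

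Next, I would unroll $H_t=\gamma_t H_{t-1}+k_tk_t^\top$ to obtain $H_t=\sum_{j\le t} M_{j,t}\, k_j k_j^\top$, from which a variation in $k_i$ (with $i\le t$) yields the symmetric rank-two perturbation $\delta H_t=M_{i,t}(\delta k_i\, k_i^\top + k_i\,\delta k_i^\top)$. Using the symmetry of $H_t$, the induced change in the regularizer is
\begin{equation*}
\delta \lambda_t = \frac{a}{\|H_t\|_{\text{F}}}\,\trace(H_t\,\delta H_t) = \frac{2a\,M_{i,t}\,k_i^\top H_t\,\delta k_i}{\|H_t\|_{\text{F}}}.
\end{equation*}
Substituting both perturbations into the contribution above, the $\delta H_t$ piece gives $-M_{i,t}\bigl[(k_i^\top x_t^*)\,dq_t^\top + (dq_t^\top k_i)\,(x_t^*)^\top\bigr]\delta k_i$, while the $\delta\lambda_t$ piece gives $-M_{i,t}\,\frac{2a(x_t^*)^\top dq_t}{\|H_t\|_{\text{F}}}\,k_i^\top H_t\,\delta k_i = -M_{i,t}\,w_t\,k_i^\top H_t\,\delta k_i$, by definition of $w_t$.

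Finally, reading off the gradient by writing each bracket as a vector dotted with $\delta k_i$, and noting that $(k_i^\top x_t^*)\,dq_t = dq_t(x_t^*)^\top k_i$ and $(dq_t^\top k_i)\,x_t^* = x_t^* dq_t^\top k_i$, I would sum over all $t\ge i$ (the set of time steps in which $k_i$ participates in $H_t$) to obtain
\begin{equation*}
dk_i = \sum_{t\ge i} M_{i,t}\bigl(-dq_t(x_t^*)^\top - x_t^* dq_t^\top - w_t H_t\bigr)k_i,
\end{equation*}
which matches \cref{eq:dk_i-main}. The main obstacle I anticipate is the bookkeeping for the $\lambda_t$ contribution: one must resist the temptation to treat $\lambda_t$ as a scalar independent of $H_t$, and then use the symmetry of $H_t$ to reduce $\trace(H_t\delta H_t)$ on the rank-two perturbation to the single-term expression $2 M_{i,t} k_i^\top H_t\,\delta k_i$. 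The rest is careful chain-rule accounting that converts a bilinear form in $\delta k_i$ into the compact matrix-on-$k_i$ representation in the statement.
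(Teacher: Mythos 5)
Your proposal is correct and follows essentially the same route as the paper: implicit differentiation of $(H_t+\lambda_t I)x_t^*=q_t$, the chain rule through $\lambda_t=a\|H_t\|_{\text{F}}$ via $\trace(H_t\,\delta H_t)/\|H_t\|_{\text{F}}$, and unrolling the gated recurrence so that $\delta H_t=M_{i,t}(\delta k_i k_i^\top+k_i\delta k_i^\top)$, which after symmetrization yields exactly the three terms $-dq_t(x_t^*)^\top-x_t^*dq_t^\top-w_tH_t$. The only difference is presentational: you work directly with perturbations and trace pairings, whereas the paper's appendix carries out the same computation with $\mathrm{vec}$ and Kronecker-product identities before simplifying back to the same matrix-on-$k_i$ form.
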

With  $A_i:= \sum_{t\geq i} M_{i,t} \cdot  dq_t (x_t^*)^\top$, we can compute the first two terms $-A_ik_i$ and $-A_i^\top k_i$ in \cref{eq:dk_i-main}, similarly to \cref{eq:linear-attn}. Specifically, $A_i$ satisfies the recursion
\begin{align}\label{eq:Ai}
    A_i = \gamma_{i+1} A_{i+1} +  dq_i (x_i^*)^\top,
\end{align}
thus calculating $A_ik_i$ amounts to calculating $U_tq_t$ in \cref{eq:linear-attn}; a difference is that the recursion here runs backwards. 

Similarly, with $ B_i = \sum_{t\geq i}  M_{i, t} w_t H_t$, the third term in \cref{eq:dk_i-main} can be written recursively as 
\begin{align}\label{eq:Bi}
    B_i = \gamma_{i+1} B_{i+1} + w_i H_i, \quad o_i =  B_i k_i.
\end{align}

\myparagraph{Chunk-wise Recurrence} As indicated, a chunk-wise implementation for computing $A_ik_i$ is known. On the other hand, computing $B_ik_i$ is more challenging than $A_ik_i$, as the additive term $w_iH_i$ in the backward recursion \cref{eq:Bi} is not necessarily rank-$1$; rather, $H_i$ itself is defined via the forward recursion in \cref{eq:ch-forward}. Our contribution here is a derivation for computing $B_ik_i$ efficiently in a  chunk-wise manner.

We begin by unrolling $B_i$ to $B_{C+1}$:
\begin{equation}\label{eq:Bi=intra+inter}
     \begin{split}
         B_i =& M_{i, C} \cdot \gamma_{C+1}B_{C+1} + B_{i}^{\text{intra}} \\ 
     B_{i}^{\text{intra}}:=&  \sum_{c=i}^{C} M_{i,c} \cdot w_c H_c
     \end{split}
\end{equation}
We next discuss the \textit{intra-chunk} term $B_{i}^{\text{intra}} k_i$ and \textit{cross-chunk} term $M_{i, C} \cdot \gamma_{C+1}B_{C+1}$ in succession.

\myparagraph{Intra-chunk Computation} We now unroll $H_c$ and obtain an expression $B_{i}^{\text{intra}}$ more amenable to parallelism:
\begin{align*}
    B_{i}^{\text{intra}}  &= \sum_{c=i}^{C} M_{i,c} \cdot w_c H_c \\ 
    &= \sum_{c=1}^{C} M_{i,c}  w_c  \Big( \zeta_c H_0 + \sum_{j=1}^C M_{j,c} k_j k_j^\top  \Big) \\ 
    &= H_0 \sum_{c=1}^{C} M_{i,c}  w_c  \zeta_c  + \sum_{j=1}^C k_j k_j^\top \sum_{c=1}^{C} M_{i,c} w_c M_{j,c} .  
\end{align*}
The coefficients of $H_0$, written as $b_i$, are easily computed in parallel for all $i$ via element-wise operations, broadcasting, and summing. The coefficient of $k_j k_j^\top$ is precisely the $(i,j)$-th entry of the matrix $M_w:= M\diag(w_1,\dots,w_C)  M^\top $. Thus  $[B_{1}^{\text{intra}}k_1,\dots,B_{C}^{\text{intra}}k_C]$ is equal to
\begin{align*}
    H_0 (\diag(b_1,\dots,b_C) K_C) +  K_C \left( (K_C^\top K_C) \odot M_w \right).
\end{align*} 
The \textit{mask} $M_w$ is in general a full matrix with no zero entries, as opposed to the triangular matrix in the case of \cref{eq:linear-attn}. While the triangular mask in the backward pass allows the error feedback from future tokens to be leveraged for learning past tokens, here our full mask $M_w$ allows all tokens to interact with all other tokens in the backward pass, which facilitates the information flow and learning.

\begin{figure*}[h]
    \centering
    \includegraphics[width=0.195\textwidth]{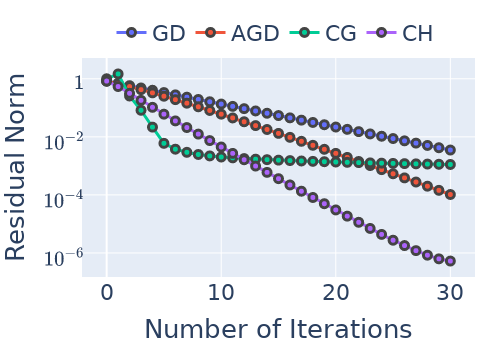}
    \includegraphics[width=0.195\textwidth]{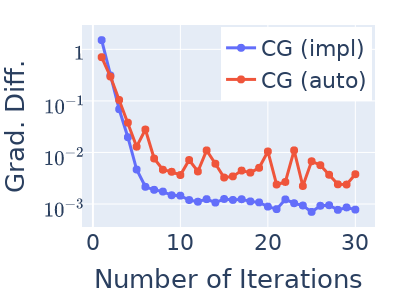}
    \includegraphics[width=0.195\textwidth]{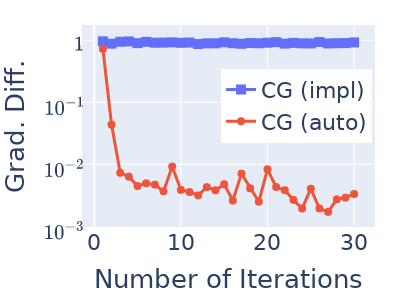}
    \includegraphics[width=0.195\textwidth]{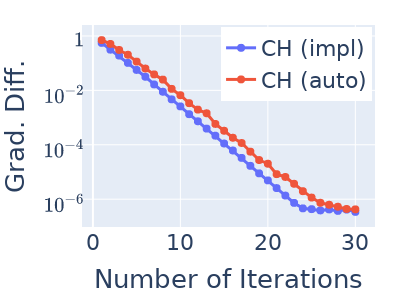} 
    \includegraphics[width=0.195\textwidth]{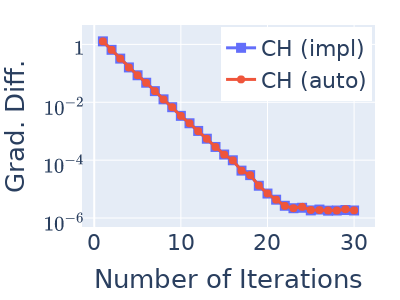}

    \makebox[0.195\textwidth]{\footnotesize   (a) Empirical convergence}
    \makebox[0.195\textwidth]{\footnotesize  (b) CG as a single layer}
    \makebox[0.195\textwidth]{\footnotesize  (c) CG in 5-layer LLAMA }
    \makebox[0.195\textwidth]{\footnotesize  (d) CH as a single layer}
    \makebox[0.195\textwidth]{\footnotesize  (e) CH in 5-layer LLAMA }
    
    \caption{\textbf{CH converges with smaller errors than CG and is more numerically stable.} Convergence of different methods  in residual norms during the forward pass with batch size $8$, sequence length 2048, 8 heads,  head dimension 128 (a), and relative gradient differences  from the exact solver (\texttt{torch.linalg.solve}) to CG (b, c) or CH (d, e). The backward pass is via \textit{implicit differentiation} (\textit{impl}) or \texttt{torch.autograd} (\textit{auto}); cf. \cref{table:implicit-diff}.  In (b, d) the gradients are those of $[q_t,k_t]$; in (c, e) the gradients are those of network weights. \label{fig:cgch-grad}}
    \vspace{-0.3cm}
\end{figure*}

\myparagraph{Cross-chunk Computation} In \cref{eq:Bi=intra+inter}, both $\gamma_{C+1}$ and $B_{C+1}$ are from the future chunks, thus we revise \cref{eq:Bi=intra+inter} into the cross chunk recursion of $\widetilde{B}_{C+1}:=\gamma_{C+1}B_{C+1}$ which allows us to maintain a single term $\widetilde{B}_{C+1}$ from the future:
\begin{align*}
    \widetilde{B}_1 = \zeta_C \cdot \widetilde{B}_{C+1} + \widetilde{B}^{\text{intra}}_1, \ \  \ \widetilde{B}^{\text{intra}}_1:=\sum_{c=i}^{C} \zeta_c \cdot w_c H_c.
\end{align*}
In our intra-chunk computation, we store the  intra-chunk term $\widetilde{B}^{\text{intra}}_1$ of all chunks, implement the above with a simple for loop, and collect the terms $\zeta_C \cdot \widetilde{B}_{C+1}k_i$.

\subsubsection{Comparison to Other Iterative Solvers}\label{subsection:solver-comparison}
Here we validate our choice of Chebyshev Iteration (CH) by benchmarking it against other iterative methods. 

\myparagraph{Convergence in the Forward Pass} We generate random regression problems, which we solve via CH and 3 other baselines:  gradient descent (GD), accelerated GD with Nesterov's momentum (AGD), conjugate gradient (CG). GD and AGD are run with stepsizes that are optimal for regression problems. \cref{fig:cgch-grad}a shows CG converges the fastest within a few iterations, while CH reaches the same accuracy as CG at iteration 10 and eventually attains the smallest errors.

\begin{table}[]
    \centering
    \caption{\textbf{Implicit differentiation for computing $dq_t$.} 
    }
    \label{table:implicit-diff}
    \scalebox{0.8}{
        \begin{tabular}{lcc}
        \toprule 
          & forward pass & backward pass \\
          \midrule
           exact & $x^*_t = (H_t+\lambda_t I)^{-1} q_t$ & $dq_t^*= (H_t+\lambda_t I)^{-1} dx^*_t$  \\
           CG  & $\hat{x}_t = \text{CG}(H_t+\lambda_t I, q_t, r)$ & $ d\hat{q}_t = \text{CG}(H_t+\lambda_t I, d \hat{x}_t, r)$\\ 
           CH & $\hat{x}_t = \text{CH}(H_t+\lambda_t I, q_t, r)$ & $ d\hat{q}_t = \text{CH}(H_t+\lambda_t I, d \hat{x}_t, r)$\\ 
           \bottomrule
        \end{tabular}
    }
\end{table}

\myparagraph{Stability of the Backward Pass} We then proceed and measure the gradient stability of CG and CH, whose backward passes are implemented either via implicit differentiation as per \cref{table:implicit-diff} (\textit{impl}), or via \texttt{torch.autograd} (\textit{auto}).

In \cref{fig:cgch-grad}b, CG (impl) as a standalone layer has its gradient close to that of the exact solver up to a $10^{-3}$ relative difference. In \cref{fig:cgch-grad}c, this difference is amplified to almost $1$ in a 5-layer LLAMA where \cref{eq:attn} is replaced with \cref{eq:rls}. This indicates CG (impl) completely deviates from the reference gradient (exact), defeating its purpose of training the network from the regression feedback. In contrast, the gradients of CH (impl) and CH (auto) are eventually close to that of the exact solver either as a single layer (\cref{fig:cgch-grad}c) or within multiple layers (\cref{fig:cgch-grad}d), up to a $10^{-6}$ difference. Moreover, the curves for CH (impl) and CH (auto) nearly overlap, suggesting that their gradients may be close. \cref{lemma:CH-autoq=CH-implq} formalizes this intuition and justifies our choice of CH over the alternatives:
\begin{lemma}\label{lemma:CH-autoq=CH-implq}
    Let $dq_t$ be the exact gradient of $q_t$ for CH, e.g., computed by CH (auto). Let $d \hat{q}_t$ be the gradient of CH (impl), computed as per \cref{table:implicit-diff}. We have $dq_t = d\hat{q}_t$.
\end{lemma}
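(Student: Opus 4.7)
The key observation is that Chebyshev Iteration, run with a fixed eigenvalue-bound schedule, is a \emph{linear} map in its right-hand side. Concretely, I want to argue that for any symmetric $H$ and any input vector $q$, the output of \cref{algo:Chebyshev} can be written as $\xi_r = P_r(H)\, q$ where $P_r$ is a scalar polynomial of degree at most $r$ whose coefficients depend only on $L, \mu, \rho$ and the iteration number $r$, but \emph{not} on $q$. Once this polynomial representation is in hand, the lemma drops out from the symmetry of $H$.

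\textbf{Step 1 (polynomial representation).} I would prove $\xi_i = P_i(H)\, q$ by induction on $i$. The base cases are $\xi_{-1} = 0 = P_{-1}(H)\, q$ with $P_{-1}\equiv 0$, and $\xi_0 = \tfrac{2}{L+\mu} q = P_0(H)\, q$ with $P_0 \equiv \tfrac{2}{L+\mu}$. For the inductive step, I would combine \cref{eq:ch-gd} and \cref{eq:ch-momentum} into the single recursion
\begin{equation*}
\xi_i = \Bigl(\omega_i I - \tfrac{2\omega_i}{L+\mu} H\Bigr)\xi_{i-1} - (\omega_i-1)\,\xi_{i-2} + \tfrac{2\omega_i}{L+\mu}\, q,
\end{equation*}
and observe that the scalars $\omega_i$ generated by \cref{eq:weight-update} are functions of $L,\mu$ alone. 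Plugging in the inductive hypothesis shows the bracketed expression collapses to a polynomial in $H$ times $q$, giving $\xi_i = P_i(H)\,q$ with the three-term polynomial recurrence
\begin{equation*}
P_i(H) = \Bigl(\omega_i I - \tfrac{2\omega_i}{L+\mu} H\Bigr)P_{i-1}(H) - (\omega_i-1)P_{i-2}(H) + \tfrac{2\omega_i}{L+\mu} I.
\end{equation*}
Hence $\hat{x}_t = P_r(H_t+\lambda_t I)\, q_t$, where $\lambda_t = a\|H_t\|_{\text F}$ is a function of the keys and gates but not of $q_t$.

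\textbf{Step 2 (symmetry and the two backward passes).} Since $H_t+\lambda_t I$ is symmetric positive definite, any polynomial $P_r(H_t+\lambda_t I)$ in it is symmetric. For the autograd path, because $\hat{x}_t$ is linear in $q_t$ with Jacobian $\partial \hat{x}_t/\partial q_t = P_r(H_t+\lambda_t I)$, the reverse-mode VJP gives
\begin{equation*}
dq_t = P_r(H_t+\lambda_t I)^{\!\top} d\hat{x}_t = P_r(H_t+\lambda_t I)\, d\hat{x}_t.
\end{equation*}
For the implicit-differentiation path, the same CH procedure is applied to $(H_t+\lambda_t I)\, \tilde q = d\hat{x}_t$, so by Step 1 (with $d\hat{x}_t$ in place of $q_t$),
\begin{equation*}
d\hat{q}_t = \mathrm{CH}(H_t+\lambda_t I, d\hat{x}_t, r) = P_r(H_t+\lambda_t I)\, d\hat{x}_t.
\end{equation*}
Comparing the two expressions yields $dq_t = d\hat{q}_t$.

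\textbf{Anticipated obstacle.} The only nontrivial bookkeeping is verifying the inductive step cleanly, i.e., that the constant term $\tfrac{2\omega_i}{L+\mu} q$ and the $(\omega_i-1)\xi_{i-2}$ correction do not break the polynomial-in-$H$ form. Once written as the three-term recurrence above this is immediate, but care is required to confirm that the $\omega_i$ are truly independent of $q$ (they are, by inspection of \cref{eq:weight-update}) and that $\lambda_t$ has no hidden dependence on $q_t$ (it does not, since $\lambda_t = a\|H_t\|_{\text F}$ is a function of $\{k_i,\gamma_i\}_{i\le t}$ only). With these two facts in place, the rest of the argument is purely algebraic.
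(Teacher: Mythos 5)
Your proposal is correct and follows essentially the same route as the paper's own proof in \cref{appendix:proof-implicit-explicit-dq}: represent the CH output as $\xi_r = p_r(H)\,q$ for a polynomial $p_r$ whose coefficients are independent of $q$, note that symmetry of $H$ makes $p_r(H)$ symmetric so the autograd VJP is $p_r(H)\,d\hat{x}_t$, and observe that this is exactly what running CH on $d\hat{x}_t$ computes. Your explicit induction via the three-term recurrence just fills in a step the paper states without proof.
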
 
See \cref{appendix:proof-implicit-explicit-dq} for a proof.


\begin{figure}
    \centering
    \includegraphics[width=0.28\textwidth]{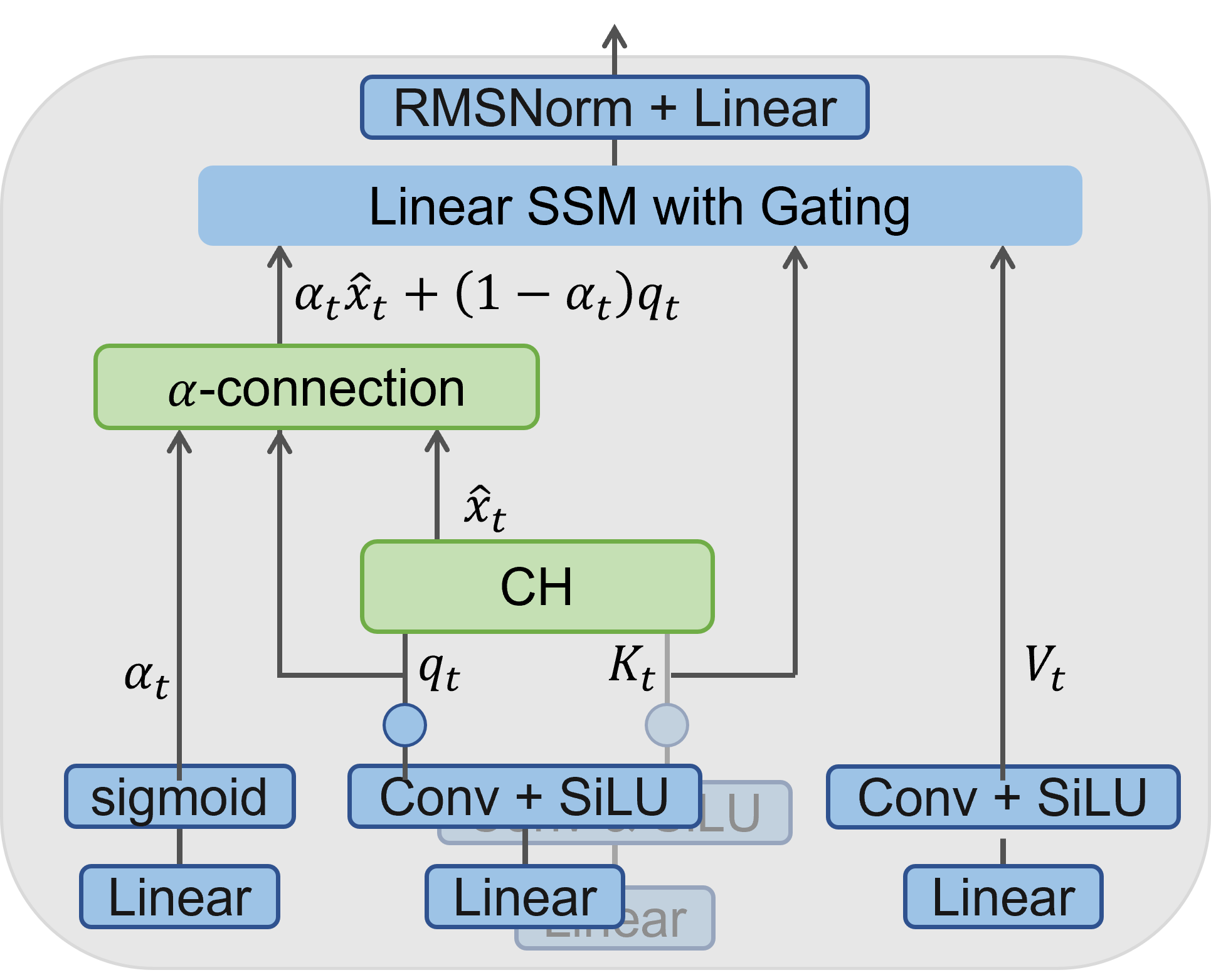}
    \caption{\textbf{Our \ourshortname block}. Blue refers to established practices in the literature with the solid circles denote $\ell_2$ normalization. Green components (CH and $\alpha$-connection) are our proposals. }
    \label{fig:GKA-block}
    \vspace{-0.5cm}
\end{figure}


\begin{figure*}
    \centering
    \includegraphics[width=0.7\textwidth]{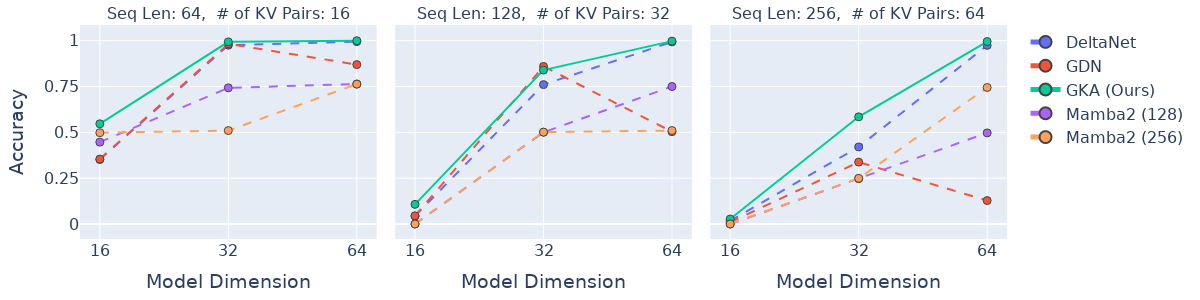}
    \includegraphics[width=0.28\textwidth]{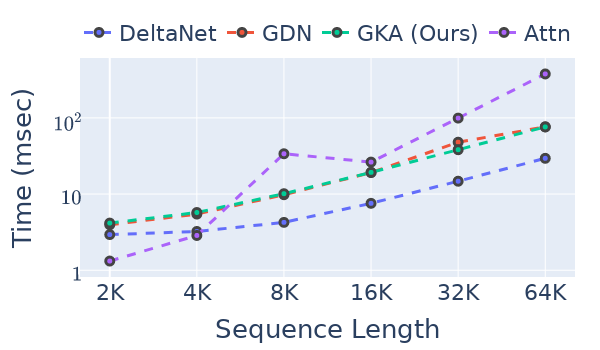}

   \makebox[0.7\textwidth]{\footnotesize  (a) Accuracy vs. model dimension for different fading memory layers on MQAR. }
    \makebox[0.28\textwidth]{\footnotesize \quad  \quad (b) Runtime of a single memory layer  }
    
    \caption{ \textbf{MQAR results} (a) Each plot corresponds to a particular sequence length and number of key-value pairs for the model to memorize. \textbf{Runtime} (b)  Runtimes are for a single forward + backward pass (8 heads, head dim $128$, batch size $4$, averaged over 20 runs). }
    \label{fig:mqar+fig:speed}
    \vspace{-0.3cm}
\end{figure*}

\subsection{Architectural Consideration}\label{subsection:GKA-block}
Our \ourshortname layer in \cref{fig:GKA-block} includes two components (in green) on top of established practices (in blue). The CH component is  described in \cref{subsection:CH-adaptive-reg}, thus here we introduce the \textit{$\alpha$-connection}.  First,  the sigmoid activation ensures $\alpha_t\in[0,1]$, so the output of the $\alpha$-connection is a convex combination of  the original query $q_t$ and the output $\hat{x}_t$ of CH. Second, it plays a similar role to residual connection, which establishes a direct path that facilitates the gradient flow and improves training; we show this is indeed the case in \cref{appendix: alpha connection ablation}. Finally, the full architecture for \ourshortname is the standard Transformer, with its attention layer replaced by the \ourshortname layer.


\section{Experiments}
\label{section: experiments}

In this section, we empirically validate the efficacy of \ourshortname. We first evaluate memorization ability on synthetic associative recall tasks (\cref{subsection: MQAR}). We then report training throughput of \ourshortname  (\cref{subsection: throughput}). Next, we examine performance on short-context language understanding benchmarks such as commonsense reasoning and long-context modeling abilities in \cref{subsection: language modeling}. Finally, we demonstrate the effectiveness of GKA beyond language modeling by evaluating on ImageNet classification (\cref{subsection: vision}). The Appendix details our experimental settings (\cref{sec: Model architecture}) and ablations of various modeling choices (\cref{appendix: ablations}, \cref{appendix: ablation-reg-strength}).

\myparagraph{Baselines} All experiments consider the following state-of-the-art linear SSM-based fading memory layers as baselines: Mamba2 \cite{Dao-ICML2024-mamba2}, DeltaNet \cite{Yang-NeurIPS2024}, Gated DeltaNet (GDN) \cite{Yang-ICLR2025}, and Gated Linear Attention (GLA) \cite{Yang-ICML2024-gla}. Each of these layers rely on instantaneous objectives that depend on the previous \textit{lossy} state and current tokens (e.g., \cref{eq:GDN-obj}), as opposed to the entire history of tokens observed so far as in \ourshortname. Finally, we contrast our results with (Softmax) Attention, which serves as our paragon.
For our Attention-based model, we adopt the architecture proposed in Qwen3 models \cite{yang2025qwen3}.

\subsection{\ourshortname on Synthetic Associative Recall Tasks} \label{subsection: MQAR}


We first assess the capability of our models to recall information on the multi-Query Associative Recall (MQAR) task ~\cite{arora2023zoology}.
This task presents the model with a sequence of key-value pairs to memorize, followed by a sequence of queries. For each query, the model must retrieve the corresponding key from memory and accurately recall its associated value.
Attention based layers perform the best in this task, while SSM-based memory layers are known to struggle as their memory fades away as the context length grows. 

We compare \ourshortname with Attention and other linear SSM baselines on this task. For each memory layer type, we train 2-layer models on MQAR training data and evaluate on a held-out test set. We repeat this experiment for $4$ different learning rates spanning from $10^{-4}$ to $10^{-2}$. 
As shown in \cref{fig:mqar+fig:speed}a, \ourshortname improves upon every other linear SSM baseline at all sequence lengths and model dimensions considered. Note, the complexity of the task increases with increasing sequence length and number of key-value pairs, while larger model dimensions improve memorization capacity through increased state size. The success of our layer can be attributed to our modeling choice: unlike other fading memory designs (like GDN or Mamba2), we construct states based on the optimal MAP estimate conditioned on the entire history, enabling better retention of remote information.

\subsection{Training Throughput of \ourshortname}
\label{subsection: throughput}
\cref{fig:mqar+fig:speed}b compares the running time (forward + backward) of a single \ourshortname layer against FlashAttention~\cite{golden2024flash}, DeltaNet, and GDN (see also \cref{section:throughput-long-seq}). Despite its more expensive state update \cref{eq:ch-forward}, \ourshortname achieves a speed comparable to that of GDN at the same state size (GDN expands the value dimension by 2$\times$ by default). This demonstrates the efficiency of our
chunk-wise parallelization strategy which effectively compensates
for the additional computational cost.


\begin{table*}[htbp]
\centering
\caption{\textbf{On average \ourshortname improves upon all fading memory baselines across all tasks.} We report results for zero-shot evaluation of 2.8B language models for short-context tasks. For each task, bold indicates highest value followed by underlined.}
\label{table:ssm_lm_harness_comparison}
\small
\resizebox{0.9\textwidth}{!}{
\begin{tabular}{l|cccccccccc|cc}
\toprule
\textbf{Model} & \textbf{\begin{tabular}{@{}c@{}}ARC-C\end{tabular}} & \textbf{\begin{tabular}{@{}c@{}}ARC-E\end{tabular}} & \textbf{BoolQ} & \textbf{COPA} & \textbf{HellaSWAG} & \textbf{PIQA} & \textbf{SciQ} & \textbf{Winogrande} & \textbf{FDA} & \textbf{SWDE} & \textbf{Avg} \\
& acc\_n $\uparrow$ & acc\_n  $\uparrow$ & acc $\uparrow$ & acc $\uparrow$ & acc\_n $\uparrow$ & acc\_n $\uparrow$ & acc\_n $\uparrow$ & acc $\uparrow$ & contains $\uparrow$ & contains $\uparrow$ & \\
\midrule
\midrule
Transformer & 32.25 & 56.10 & \textbf{64.28} & 80.00 & 60.96 & 73.56 & 79.50 & 61.72 & \textbf{58.53} & \textbf{72.28} & \textbf{63.92} \\
\midrule
Gated Linear Attention & 27.82 & 50.80 & 52.57 & 78.00 & 48.83 & 70.13 & 69.60 & 54.54 & 2.81 & 20.43 & 47.55 \\
DeltaNet & \textbf{32.85} & 58.16 & 42.51 & 81.00 & 61.13 & 73.78 & 43.90 & 61.72 & 11.80 & 46.08 & 51.29 \\
Mamba2 & 32.24 & 59.64 & 58.72 & \underline{82.00} & 62.23 & 73.78 & 79.80 & 62.19 & 7.71 & 41.13 & 55.94 \\
Gated DeltaNet & \underline{32.59} & \textbf{60.02} & \underline{62.75} & \underline{82.00} & \underline{62.80} & \underline{74.32} & \underline{80.60} & \underline{62.35} & 8.26 & 44.28 & 57.00 \\
Gated KalmaNet (Ours) & 32.51 & \underline{59.89} & 61.68 & \textbf{85.00} & \textbf{63.84} & \textbf{74.81} & \textbf{83.20} & \textbf{64.17} & \underline{12.89} & \underline{50.95} & \underline{58.89} \\
\bottomrule
\end{tabular}
}
\end{table*}

\begin{figure*}
    \centering
    \includegraphics[width=1.0\textwidth]{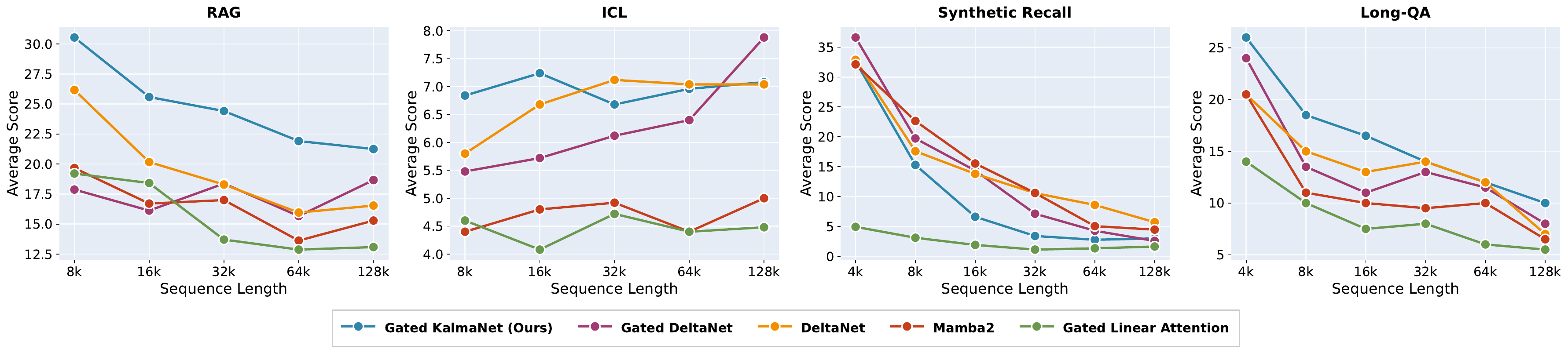}
    \caption{\textbf{Long Context Performance up to 128k tokens}. \ourshortname outperforms baseline by 10\% in relative improvement on RAG and LongQA. There is no clear winner Synthetic Recall. All models struggle to perform better than random chance on ICL.}
    \label{fig:longctx_ssm}
    \vspace{-6pt}
\end{figure*}
    
\subsection{\ourshortname on Language Modeling} 
\label{subsection: language modeling}
\subsubsection{Short-context Tasks}
\label{subsubsection: short-context tasks}
\myparagraph{Setup} For this set of experiments, we construct 2.8B LLM models for each memory layers (\ourshortname and baselines described in \cref{section: experiments}) by cascading blocks of \texttt{mem} + Multi-Layer Perceptron (MLP) blocks.\footnote{For Mamba2 baseline, we consider cascading blocks of Mamba2 layer alone since a single Mamba2 layer has the Mamba2 SSM and MLP.} Hereby, we refer to the 2.8B models with the same name as the layer used to construct them. We then train each model on DCLM \cite{li2024datacomp}, a generic pre-training dataset for $100$B tokens at $4$K context length using the AdamW optimizer with a peak Learning Rate (LR) of $10^{-3}$ and gradient clipping of $1.0$. We used the cosine LR scheduler with a warmup period of $5$B tokens with a global batch size of $2$M tokens. All models employ the GPT2 tokenizer with a vocabulary size of $50$K tokens. 

\myparagraph{Tasks} Following prior works \cite{Zancato-NeurIPS2024-BMOJO, Yang-ICML2024-gla, Yang-ICLR2025}, to consider language modeling capabilities of our model we perform zero-shot evaluation on the following eight common-sense reasoning tasks from LM-Harness \cite{eval-harness}: Arc-E, Arc-C, BoolQ, COPA, HellaSWAG, PIQA, SciQ, Winogrande. We also evaluate models on FDA and SWDE, real-world recall-intensive tasks which focus on extracting structured information like tagged content from raw text (for example, HTML files). All these tasks are relatively short ($<2$K tokens).

\myparagraph{Results} We report our results in \cref{table:ssm_lm_harness_comparison}. \ourshortname outperforms all fading memory baselines on average across all tasks owing to its ability to better manage its state via solving \cref{eq:rls}. In particular, \ourshortname outperforms both GDN and Mamba2 on recall-intensive tasks (FDA and SWDE) by about $10\%$ (rel. improvement). We note that although \ourshortname improves upon existing SSM layers there is still a gap with Attention-based Transformer especially on recall-tasks owing to the eidetic capabilities of Attention. Nevertheless, as discussed in \cref{section:intro} this improvement comes at a quadratic cost at training time, whereas our layer's computational complexity is still comparable to existing SSM layers (cf. \cref{subsection: throughput}). In \cref{appendix: hybrid} we extend our results to Hybrid models (stack of SSM and Attention layers) and show that the gap with full Transformer models becomes negligible (while still benefiting the SSM's computational advantages). Finally, in \cref{appendix: GKA scaling laws} we show that \ourshortname exhibits stronger scaling with compute than other SSM baseline models.


\subsubsection{Long-context Tasks}

\myparagraph{Setup} To enable long-context capabilities of our models, as is common practice, we perform continued pre-training of our 2.8B models obtained in \cref{subsubsection: short-context tasks} on $25$B tokens of long documents at $128$K context length (cf. Appendix). To the best of our knowledge we are the first to train and evaluate SSM models up to $128$K context (e.g., previous work \cite{Yang-ICLR2025} only considered up to $4$K/$8$K context).

\myparagraph{Tasks} For long-context, we refrain from using perplexity as it is known to have limitations at assessing long-context performance of LLMs \cite{nunez2024expansion,fang2024wrong,gao2025train}. Instead, we turn to recently proposed benchmarks that mix synthetic and real datasets comprising several long-context tasks: Synthetic Recall, Retrieval-Augmented Generation (RAG), Many shot In-Context Learning (ICL) and Long Question-Answering (LongQA). For Synthetic Recall and LongQA we consider tasks from the RULER benchmark \cite{hsieh2024ruler}. For RAG and ICL we consider tasks from HELMET \cite{yen2025helmet}. 

\myparagraph{Results} \cref{fig:longctx_ssm} reports our results. \ourshortname shows strong RAG and LongQA capabilities, outperforming all fading memory baselines by at least $10$\% (rel. improvement). Interestingly, on Synthetic Recall tasks from RULER, \ourshortname is competitive only at $4$K context length and starts to fall behind afterwards. We attribute this gap to the nature of the tasks. RAG and LongQA involve realistic linguistic patterns that align with natural text distributions seen during pretraining, whereas synthetic recall requires verbatim retrieval of random tokens from artificial contexts. Since \ourshortname computes MAP estimates of the latent state using learned representations, its pretrained weights guide which information to retain. This is advantageous in naturalistic settings, but less so when the signal-to-noise distinction is purely artificial (see \cref{section:no-S-NIAH} for further analysis).

\subsection{\ourshortname on Image Classification}
\label{subsection: vision}
SSMs have gained popularity in vision tasks \cite{zhuvision, hatamizadeh2025mambavision, liulonghorn}. Here, we investigate \ourshortname as a viable alternative to other SSMs for vision problems. Specifically, we consider ImageNet classification \cite{Deng-CVPR2009} and compare against MambaVision \cite{hatamizadeh2025mambavision}, a state-of-the-art Hybrid Mamba-Transformer vision model. MambaVision modifies the vanilla Mamba layer with non-causal convolutions and a symmetric non-SSM branch, yielding about 2\% improvement over vanilla Mamba \cite[Table 4]{hatamizadeh2025mambavision}.We introduce our variant, GKAVision, obtained by replacing the MambaVisionMixer blocks in MambaVision with our \ourshortname layer. More details are provided in \cref{sec: GKAVision}. Our results in \cref{table:vision_imagenet}, show that GKAVision outperforms MambaVision while closing the gap with a pure vision Transformer (NextViT-S \cite{li2022next}) at 33\% higher training throughput. We did not implement any vision-specific changes to \ourshortname, which might result in further gains.

\begin{table}[htbp]
\centering
\caption{\textbf{GKAVision outperforms MambaVision on ImageNet classification} (averaged over 5 seeds). Models are $\sim$31.8M parameters; training throughput measured on 8 H200 GPUs with batch size 256. Unlike MambaVision, GKAVision uses the GKA layer without any vision-specific modifications.}
\label{table:vision_imagenet}
\small
\resizebox{0.45\textwidth}{!}{
\begin{tabular}{l|c|c}
\toprule
\textbf{Model} & \textbf{Top-1 Accuracy (\%) $\uparrow$} & \textbf{Throughput (K img/s) $\uparrow$} \\
\midrule
\midrule
MambaVision-T & 81.18 & \textbf{16.25}\\
GKAVision-T & \underline{81.27} & \underline{13.72} \\
NextViT-S & \textbf{81.99} & 10.32 \\
\bottomrule
\end{tabular}}
\end{table}

\section{Kalman Filter for \textit{Optimally} Modelling Fading Memory}
In this section, we show how the Kalman Filter (KF) provides a principled solution for constructing an optimal fading memory that accounts for the entire history. We begin by describing the standard Kalman Filter recurrence in the context of memory modeling. However, the KF has a fundamental limitation: its inherently sequential nature makes it impractical for large-scale training on modern hardware accelerators (\cref{subsection:hurdle}). To address this, we make simplifying assumptions that makes KF amenable to parallelization on modern hardware accelerators. We then demonstrate that several recent state-space models (DeltaNet, Gated DeltaNet, and Kimi Delta Attention) can be viewed as approximations to the KF recurrence. Specifically, these methods approximate the ``optimal" Kalman gain matrix while ignoring dependencies on the past. In contrast, \ourshortname computes the exact Kalman gain by considering the full history. This theoretical advantage translates to improved empirical performance, as we demonstrate in \cref{section: experiments}.

\subsection{A Dynamical System for Fading Memory}
The Kalman filter is a classical algorithm for online optimal inference in Linear Gaussian State-Space Models. It gives a principled way to maintain and update an estimate of a latent state as new noisy observations arrive. The true latent state serves as a compressed 'memory' of the past: given the state, the future becomes conditionally independent of past observations.


We begin by describing a linear Gaussian model for fading memory.
\begin{equation}\label{eq: LGM}
\begin{aligned}
     s_t &= A_t s_{t - 1} + B_tu_t + w_t, \quad& w_t \sim \mathcal{N}(0, Q_t) \\
     v_{t} &= k_t^{\top}s_t + \mu_t, \quad&  \mu_t \sim \mathcal{N}(0, r_t),
\end{aligned}\tag{LGM}
\end{equation}
where $s_t \in \mathbb{R}^n$ is a latent state that summarizes the past, $u_t \in \mathbb{R}^n$ is the control input that updates the state and $v_t$ is the scalar measurement observed at time $t$. $A_t, B_t \in \mathbb{R}^{n \times n}$ are the state transition and input selection matrices, and $k_t \in \mathbb{R}^{n}$ is the emission (readout) vector. Finally, $w_t$ and $\mu_t$ are Gaussian process and measurement noise, respectively.

\textbf{Parameter interpretation.} $A_t$ and $B_t$ control the forgetting (fading of the remote past) and input selectivity rates respectively, determining how the state evolves over time. The measurement noise $\mu_t$ naturally gives rise to gating mechanisms commonly used in modern SSM layers, as we will show in \cref{sec: KF-connection-with-existing-ssms}.

\textbf{Extension to multi-channel measurements.}
In attention mechanisms, the memory consists of verbatim key-value pairs that can be queried to retrieve past information \cite{Zancato-NeurIPS2024-BMOJO}. Similarly, we want our state to reconstruct past values from their corresponding keys. To achieve this, we extend to a matrix-valued state $S_t \in \mathbb{R}^{n \times n}$, where each column independently follows the dynamics in \cref{eq: LGM}.

Specifically, for the $i^{\text{th}}$ channel:
\begin{equation*}\label{eq: LGM2}
\begin{aligned}
     s_{t,i} &= A_{t,i} s_{t - 1,i} + B_{t,i}u_{t,i} + w_{t,i}, \quad& w_t \sim \mathcal{N}(0, Q_{t,i}) \\
     v_{t,i} &= k_t^{\top}s_{t,i} + \mu_{t,i} \quad&  \mu_t \sim \mathcal{N}(0, r_{t,i}),
\end{aligned}
\end{equation*}
where $(k_t, v_t)$ is the key-value pair at time $t$ and $v_{t,i}$ is the $i^{\text{th}}$ element of $v_t$. In what follows, we focus on a single channel and drop the subscript $i$ from the state for notational clarity.

\subsection{Kalman Filter for Optimal Inference}
Given the model in \cref{eq: LGM} and a sequence of measurements $\{v_1, v_2, \ldots, v_t\}$, the Kalman Filter computes the \textit{Maximum A-Posteriori} (MAP) estimate of the latent state at time $t$:
\begin{equation}\label{eq: KF-MAP}
    \hat{s}_t = \arg\max_s p(s \mid v_1, v_2, \ldots, v_t),
\end{equation}
where $p$ is a probability density function. The MAP estimate is optimal in the sense that it minimizes the expected squared error between the true state and its estimation given all measurements up to time $t$.

\textbf{The KF recursion.} The Kalman Filter updates the state estimate recursively as new measurements arrive. At time $t$, the update is:
\begin{equation}\label{eq:KF-recursive-for-LGM}
    \hat{s_t} = \underset{\textrm{Predicted state}}{\underbrace{A_t \hat{s}_{t-1} + B_t u_t}} + G_t(\overbrace{v_{t,i} - k_t^{\top}\underset{\textrm{Predicted state}}{\underbrace{\Big[A_t \hat{s}_{t-1} + B_t u_t\Big]}}}^{\textrm{Innovation}}),\\
\end{equation}
where the \textit{innovation} measures the discrepancy between the actual measurement $v_t$ and the predicted measurement based on the predicted state estimate. 

The \textit{Kalman gain} $G_t$ determines how much to trust the new measurement versus the predicted state. It is computed as follows:
\begin{equation}\label{eq: kalman gain update}
    G_t = \frac{\Big[A_t \Sigma_{t - 1} A_t^T + Q_t\Big]k_t}{k_t^{\top}\Big[A_t \Sigma_{t - 1} A_t^T + Q_t\Big]k_t + r_t}.
\end{equation}
The \textit{error covariance} $\Sigma_t$ quantifies the uncertainty in the state estimate. It represents the covariance of the estimation error $(s_t - \hat{s}_t)$ conditioned on all measurements up to time $t$. The covariance is updated as:
\begin{equation}\label{eq: error covariance update}
    \Sigma_{t} = \Big(I - G_tk_t^{\top}\Big)\Big(A_t \Sigma_{t - 1} A_t^T + Q_t\Big)
\end{equation}

Equations \eqref{eq:KF-recursive-for-LGM}, \eqref{eq: kalman gain update} and \eqref{eq: error covariance update} constitute the KF recursion. We initialize with $\hat{s}_0 = 0$ and $\Sigma_0 = \sigma I_n$, where $I_n$ is the $n \times n$ identity matrix and $\sigma$ represents our prior uncertainty about the state before observing any measurements.

\subsection{\ourname: A Steady-State Dynamical System for Large-Scale Training}
Despite its optimality, the KF recursion in its most general form is inherently sequential; each update depends on the previous state estimate. This sequential dependency prevents the parallelization necessary for efficient large-scale training on modern hardware.

To enable parallelization, we make a key simplifying assumption: the underlying state remains static over time. This reduces the problem from \textit{tracking} a dynamic state to \textit{estimating} a fixed but unknown parameter from sequential noisy measurements. Formally, we assume a steady-state model:
\begin{equation}\label{LGM-steady-state}
\begin{aligned}
     s_t &= s_{t - 1} \\
     v_{t,i} &= k_t^{\top}s_t + \mu_t, \quad&  \mu_t \sim \mathcal{N}(0, r_t),
\end{aligned}
\end{equation}
where $A_t = I_n$, $B_t = 0$, and $w_t = 0$ (i.e., no state evolution, no control input, and no process noise).

\textbf{Adapting to evolving context.} While the steady-state assumption may initially seem restrictive, contexts naturally evolve as topics change, \ourshortname addresses this through adaptive weighting (\cref{subsection:CH-adaptive-reg}). By assigning higher weights to recent measurements, older observations are naturally faded out over time, allowing the model to track shifting context despite the static formulation.

Under this simplification, the KF recursion reduces to:
\begin{equation}\label{eq:KF-recursive-for-LGM-steady-state}
\begin{aligned}
    \hat{s}_t = \hat{s}_{t-1} + G_t(v_{t,i} - k_t^\top\hat{s}_{t-1}).\\
     G_t = \frac{\Sigma_{t - 1}k_t}{k_t^{\top} \Sigma_{t - 1}k_t + r_t}.\\
    \Sigma_{t} = \Big(I - G_tk_t^{\top}\Big)\Sigma_{t - 1}.
\end{aligned}
\end{equation}
Collecting all channels, these equations can be written compactly in matrix form as shown in \eqref{eq:kf-update}\footnote{with columns of $S_t$ transposed to being rows of $S_t$ to be consistent with the notation in \eqref{eq:kf-update} and taking the noise variance $r_t = \frac{1}{\eta_t}$.}. A key insight of this work is that the KF recursion for the steady-state model admits an efficient parallel implementation via chunked processing (detailed in \cref{sec: GKA}) that results in \ourname. 

Critically, the KF recursion accounts for the entire history when computing state estimates. The Kalman gain $G_t$ at each step depends on all previous measurements through $\Sigma_{t-1}$. This contrasts with most existing SSMs, which we show next can be viewed as approximations that ignore historical dependencies when computing their gain matrices. This principled treatment of the full history is a key advantage of our approach.

\subsection{Connection with Existing SSM Layers}
\label{sec: KF-connection-with-existing-ssms}
\noindent\textbf{DeltaNet} \cite{Yang-DeltaNet}  approximates the KF recursion in \eqref{eq:KF-recursive-for-LGM-steady-state} by assuming fixed error covariance: $\Sigma_t = I_n$ for all $t$. This simplifies the Kalman gain to:
\begin{equation}\label{G_t for deltanet}
    G_t = \frac{k_t}{k_t^{\top}k_t + r_t} = \frac{k_t}{1 + r_t},
\end{equation}
where the second equality assumes unit-normalized keys, a common assumption in practical instantiations of DeltaNet. Substituting \eqref{G_t for deltanet} into the state update \eqref{eq:KF-recursive-for-LGM-steady-state} and defining $\beta_t~=~(1 + r_t)^{-1}$ yields:
\begin{equation}\label{eq: DeltaNet}
    \hat{s}_t = (I - \beta_tk_tk_t^{\top})\hat{s}_{t-1} + \beta_tk_tv_{t,i},\tag{DeltaNet}
\end{equation}
which is the DeltaNet recurrence. By fixing $\Sigma_t$, DeltaNet avoids tracking the evolving uncertainty in the state estimate, a key simplification that sacrifices optimality for computational efficiency. In contrast, \ourshortname maintains the full error covariance $\Sigma_t$, allowing it to optimally weight measurements based on the entire history.

\noindent\textbf{Gated DeltaNet (GDN)} \cite{Yang-ICLR2025} extends DeltaNet by incorporating explicit forgetting through a time-dependent decay factor $\alpha_t$. Like DeltaNet, GDN can be viewed as fixing $\Sigma_t = I_n$, but applying this approximation to the KF recursion for a fading dynamical system where the state decays over time.

Specifically, GDN assumes
\begin{equation}\label{LGM-fading-state}
\begin{aligned}
     s_t &= \alpha_t s_{t - 1} + w_t \quad&  w_t \sim \mathcal{N}(0, I_n)  \\
     v_{t,i} &= k_t^{\top}s_t + \mu_t, \quad&  \mu_t \sim \mathcal{N}(0, r_t),
\end{aligned}
\end{equation}
where $\alpha_t \in [0, 1]$ is a learned decay factor controlling how much past information to retain. This corresponds to setting $A_t = \alpha_t I_n$ in \eqref{eq: LGM}. When $\alpha_t \to 0$, the state "forgets" the past completely; when $\alpha_t \to 1$, the state is fully retained.

Under the identity covariance assumption $\Sigma_t = I_n$, the Kalman gain becomes:
\begin{equation}\label{eq: kalman gain update GDN}
    G_t = \frac{(\alpha_t^2 + 1)k_t}{(\alpha_t^2 + 1)k_t^{\top}k_t + r_t} = \frac{k_t}{1 + r_t/(\alpha_t^2+1)},
\end{equation}
where the second equality again assumed unit-normalized keys (as in DeltaNet). Defining $\beta_t~=~(1 + \frac{r_t}{\alpha_t^2 + 1})^{-1}$ and substituting into the state update \eqref{eq:KF-recursive-for-LGM} yields:
\begin{equation}
\begin{aligned}
     \hat{s}_t &= \alpha_t \hat{s}_{t-1} + \beta_t k_t(v_{t,i} - k_t^{\top}\Big[\alpha_t \hat{s}_{t-1}\Big]),\\
     &= \Big[I_n - \beta_tk_tk_t^{\top}\Big]\alpha_t \hat{s}_{t-1} + \beta_t k_tv_{t,i},
\end{aligned}\tag{GDN}
\end{equation}
which recovers the GDN recurrence. In practice, $\beta_t$ is an input-dependent learnable parameter.

Like DeltaNet, GDN avoids tracking the evolving uncertainty $\Sigma_t$, trading optimality for computational simplicity. The key difference is that GDN's explicit forgetting factor $\alpha_t$ provides additional control over the memory horizon. However, by fixing $\Sigma_t = I_n$, GDN still ignores how measurement history should optimally influence the Kalman gain, leading to suboptimal performance compared to \ourshortname (see \cref{section: experiments}).

\noindent\textbf{Kimi Delta Attention (KDA)} \cite{team2025kimi} further extends GDN by using channel-specific decay factors $\alpha_{t,i}$ in place of the global $\alpha_t$. This allows different channels to have independent memory horizons. In the KF framework, this corresponds to:
\begin{equation}
s_{t,i} = \alpha_{t,i} s_{t-1,i} + w_{t,i} \quad w_{t,i} \sim \mathcal{N}(0, I_n),
\end{equation}
for each channel $i$. While this added flexibility can improve expressiveness, KDA still assumes $\Sigma_t = I_n$ and therefore does not optimally consider the entire past when computing its state update. Like DeltaNet and GDN, KDA sacrifices optimality for computational simplicity.

\section{Discussions and Limitations}
Thanks to its expressive test-time ridge regression objective, \ourname extends previous fading memory layers like Mamba2, LongHorn and Gated DeltaNet, all of which only depend on an instantaneous test-time objective. 
However, \ourshortname is only optimal among linear memory layers, solving our test-time objective leveraging non-linear updates while still maintaining hardware efficiency and numerical stability is an interesting area for future research. Despite the efficient kernels we implemented, we believe even faster implementations of our idea are possible, e.g., via \textit{sketching} (see \cref{appendix: sketching} for preliminary results). Finally, while we have showed promising results in combining \ourshortname with Attention layers into Hybrid models (\cref{appendix: hybrid}), further scaling beyond 3B parameters models is required to validate \ourshortname on more challenging real world problems.

\newpage
{
    \small
    \bibliographystyle{ieeenat_fullname}
    \bibliography{Liangzu}
}

\appendix

\onecolumn

\section{Related Work}\label{appendix:related work}
Since the introduction of Self-Attention \cite{Vaswani-NeurIPS2017}, significant research has been conducted to reduce its quadratic cost in processing long input sequences.
As models and systems scale to million-token contexts, Attention's bottlenecks have become critical blockers to frontier agentic applications in coding, information gathering, and scientific discovery \cite{chen2024scienceagentbench, cui2025curie, jimenez2023swe}.
Prior works have proposed various approximation schemes to overcome these limitations. For example, Reformer \cite{kitaev2020reformer} uses locality-sensitive hashing to group tokens with similar embeddings. This enables the model to attend only to a subset of tokens rather than the entire sequence. Other works equip Transformer models with "compressed" memory tokens that are updated dynamically and causally over sliding windows on entire sequence chunks \cite{dai2019transformerxl, munkhdalai2024leave, mohtashami2023landmark}.
While a lot of prior work have focused on reducing the quadratic complexity of Attention with sparse approximations \cite{nunez2024expansion, yuan2025NSA}, this work focuses on linear approximations of Attention.

\subsection{Linear Attention}
Linear Attention methods approximate the Attention mechanism with constant-size recurrent dynamical systems \cite{Dao-ICML2024-mamba2, Yang-ICML2024-gla, beck2024xlstm, Yang-ICLR2025}. Numerous State-Space Model (SSM) variations have been proposed, ranging from those closely resembling Linear Attention \cite{sun2023retentive} or Linear Time-Invariant dynamical systems \cite{gu2021combining, zancato2022stacked}, to those introducing novel adaptive or gated state updates \cite{Yang-ICML2024-gla, Dao-ICML2024-mamba2, orvieto2023resurrecting}.

Despite their differences, all SSMs follow the same basic working principle inspired by classical state-space models \cite{Kalman-1960}: they process the input sequence by maintaining a \textit{fixed-size} state that acts as a compressed (lossy) representation of all processed tokens. Moreover, when implemented in hardware, the state must have finite precision and ``fades away the past" as more samples are processed. Successful SSM layers typically employ hardware-aware implementations that efficiently utilize modern matrix multiplication accelerators through highly parallelizable and scalable primitives, including associative scans \cite{gu2023mamba, de2024griffin}, chunking mechanisms \cite{Dao-ICML2024-mamba2, Yang-ICML2024-gla}, and techniques that avoid materializing the entire state in slow high-bandwidth memory \cite{gu2023mamba}.

From a modeling perspective, most Linear Attention implementations introduce data-dependent gating factors to control the speed of their ``fading'' memory, balancing expressivity with scalability. For example, the transition from Mamba to Mamba2 replaced channel-wise data-dependent gating with head-wise gating for better scalability and Tensor Cores utilization. Input-dependent Gating has been shown to empirically improve training stability \cite{arora2023zoology, Yang-ICLR2025} and has driven the development of Linear Attention models (e.g., from S4 \cite{alber_gu_s4} to Mamba \cite{gu2023mamba} and from DeltaNet \cite{Yang-DeltaNet} to Gated DeltaNet \cite{Yang-ICLR2025}). In our work, we demonstrate that gating emerges naturally as a consequence of solving a weighted least squares objective function, establishing a connection to the favorable numerical properties classically described in the adaptive filtering literature \cite{LJUNG_RLS_stability, sayed2003fundamentals, sayed2011adaptive}.

\subsection{Hybrid State Space Attention Models}\label{section:appendix_hybrid_intro}
While extending the recurrent state in SSM layers has yielded performant models, they typically underperform on tasks requiring recall of information from the distant past \cite{waleffe2024empirical, jelassi2024repeat}. Hybrid State-Space Models address this limitation by complementing SSMs' ``fading" state with Attention layers \cite{dao2024transformers, de2024griffin, lieber2024jamba, glorioso2024zamba}. Early architectures simply stacked SSMs and Attention layers with different blending ratios \cite{waleffe2024empirical, gu2023mamba, Dao-ICML2024-mamba2} or replaced full Attention layers with Sliding Window Attention \cite{de2024griffin}. More sophisticated designs have recently emerged \cite{glorioso2024zamba, Zancato-NeurIPS2024-BMOJO}.

Notably, B'MOJO \cite{Zancato-NeurIPS2024-BMOJO} complements SSMs' fading state with "eidetic" memory by combining SSMs with Sliding Window Attention (SWA) in a single layer. Within the window, tokens can attend to a selected set of past tokens that were deemed difficult to predict using an asynchronous causal selection mechanism. B'MOJO was the first hybrid model to propose a parallel fusion of SSM and SWA at the layer level. Subsequent works \cite{dong2024hymba,bae2025hybrid} have shown this parallel fusion approach to be more performant (at equivalent compute) than the stacked approach of earlier works.


Thanks to their lower memory footprint and test-time scalability over long sequences, Hybrid architectures are expanding into long-range agentic tasks and have recently been trained with Reinforcement Learning at scale \cite{chen2025minimax}. When coupled with system-level optimizations like prefix caching \cite{pan2024marconi} and specialized inference engines \cite{kwon2023efficient}, Hybrid models can increase the number of rollouts (exploration), thereby improving end-to-end performance in Reinforcement Learning loops.

\newpage

\section{Forward and Backward Passes of  Chebyshev Iteration (Details)} \label{appendix: forward+backward CH}
In \cref{section:CH} we described our chunk-wise implementation of the CH method with adaptive regularization and gating. We now give full details omitted there.

\subsection{Forward Pass}
\label{appendix: forward CH}
\myparagraph{CH in Detail} We begin with describing the CH method (\cref{algo:Chebyshev}) in more detail. Assume we have a linear system of equations $H \xi = q$ where $H$ is a $D\times D$ positive definite matrix. We assume $H$ has its all eigenvalues lie in the interval $[\mu,L]$ and the values of $\mu$ and $L$ is known. Note that solving this system is equivalent to solving the following quadratic problem:
\begin{align}\label{eq:quadratic}
    \min_{\xi\in \bbR^D}\ \frac{1}{2} \xi^\top H \xi - \xi^\top q.
\end{align}
The classic Chebyshev Iteration in its standard form is presented in \cref{algo:Chebyshev}. In the initialization phase, we set $\rho=\frac{L-\mu}{L+\mu}$, which is the typical convergence rate of gradient descent applied to the above quadratic problem with stepsize $\frac{2}{L+\mu}$; vaguely speaking, in this setting, this stepsize choice is optimal (e.g., that allows gradient descent to converge the fastest possible). \cref{algo:Chebyshev} initializes two points, $\xi_{-1}$ and $\xi_0$. Here $\xi_{-1}$ is zero, and $\xi_0$ is a gradient step for \cref{eq:quadratic} starting at $\xi_{-1}$ and with stepsize $\frac{2}{L+\mu}$. The final component in initialization is the weight $\omega_0=2$. This is the starting point for the weight schedule recursion of $\omega_i$ in \cref{eq:weight-update}. Similarly, the initialization of $\xi_{-1}, \xi_0$ is where we start to compute $\xi_i$, whose update consists of \cref{eq:ch-gd} and \cref{eq:ch-momentum}. Note that \cref{eq:ch-gd} is with stepsize $2\cdot \omega_i /(L+\mu)$. Since $\omega_i>1$, this stepsize is strictly larger than $2/(L+\mu)$, the latter being the optimal stepsize for vanilla gradient descent. Such a large stepsize alone might not guarantee convergence, but it is balanced by the  \cref{eq:ch-momentum} term $\xi_{i-1}-\xi_{i-2}$ with positive weight $\omega_i-1$ so that the convergence of the Chebyshev iterative method is ensured.

\myparagraph{Numerical Stability Considerations} Now we analyze the numerical properties of the Chebyshev Iteration. The major computation consists of matrix-vector multiplication; in a batched parallel implementation, this turns out to be matrix-matrix multiplication. For this, the numerical accuracy is well controlled (e.g., in Triton we could specify the accuracy in \texttt{tl.dot}). The update of $\omega_i$ in \cref{eq:weight-update} might raise numerical concerns as it involves division. That said, we show this division operates in a numerically well-behaved range as $\omega_i$ is decreasing with $i$ yet lower bounded by $1$:

\begin{lemma}\label{lemma:omega-decrease}
    For any $r$, we have $2=\omega_0\geq \cdots \geq \omega_{r}\geq \omega^*_1>1$, where $\omega^*_1$ is defined as 
    \begin{align*}
        \omega^*_1 := \frac{ 2(1 - \sqrt{1-\rho^2})}{\rho^2}.
    \end{align*}
    As a consequence, we have $4-\rho^2\omega_i\in[2,4]$ for all $i=0,\dots,r$.
\end{lemma}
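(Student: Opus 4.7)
The plan is to treat the update $\omega_i = f(\omega_{i-1})$, with $f(\omega):=\frac{4}{4-\rho^2\omega}$, as a one-dimensional discrete dynamical system and analyze its fixed points. Setting $f(\omega)=\omega$ reduces to the quadratic $\rho^2\omega^2-4\omega+4=0$, whose two roots are precisely $\omega_1^* = \frac{2(1-\sqrt{1-\rho^2})}{\rho^2}$ and $\omega_2^* = \frac{2(1+\sqrt{1-\rho^2})}{\rho^2}$. As a preliminary I would verify the bracketing $1<\omega_1^*\leq 2\leq \omega_2^*<4/\rho^2$: the strict lower bound $\omega_1^*>1$ follows (after squaring the positive inequality $2-\rho^2>2\sqrt{1-\rho^2}$) from $\rho^4>0$, while $\omega_1^*\leq 2$ is equivalent, under the substitution $u=\sqrt{1-\rho^2}\in[0,1]$, to $u^2\leq u$. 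Note that $\rho<1$ here because $\rho=(L-\mu)/(L+\mu)$ with $L\geq\mu>0$.

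Monotonicity and invariance both follow from a single factorization,
\[
f(\omega)-\omega \;=\; \frac{\rho^2\,(\omega-\omega_1^*)(\omega-\omega_2^*)}{4-\rho^2\omega},
\]
valid for $\omega\in[0,4/\rho^2)$. For $\omega\in(\omega_1^*,\omega_2^*)$ the numerator is negative and the denominator positive, so $f(\omega)<\omega$, which shows the sequence is strictly decreasing as long as it stays in this interval. To lock $\omega_i\geq \omega_1^*$ I would use that $f'(\omega)=4\rho^2/(4-\rho^2\omega)^2>0$, so $f$ is monotone increasing on $[0,4/\rho^2)$; combined with $f(\omega_1^*)=\omega_1^*$ this gives the implication $\omega_{i-1}\geq\omega_1^*\Rightarrow \omega_i=f(\omega_{i-1})\geq\omega_1^*$. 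Induction from the base case $\omega_0=2\in(\omega_1^*,\omega_2^*)$ then delivers $\omega_1^*\leq\omega_r\leq\cdots\leq\omega_1\leq\omega_0=2$.

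For the consequence $4-\rho^2\omega_i\in[2,4]$, the upper bound uses only $\omega_i\geq\omega_1^*>0$, and the lower bound follows from $\omega_i\leq 2$ together with $\rho<1$: $4-\rho^2\omega_i\geq 4-2\rho^2>2$. I do not expect any substantial obstacle here; the only thing requiring care is tracking signs in the factorization and confirming that $2$ lies between the two fixed points, which is exactly why $\omega_0=2$ is the right initialization for monotone convergence to the smaller fixed point $\omega_1^*$.
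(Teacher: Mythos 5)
Your proof is correct and follows essentially the same route as the paper's: both identify $\omega_1^*,\omega_2^*$ as the roots of $\rho^2\omega^2-4\omega+4$ (the fixed points of the update), use monotonicity of the map $\omega\mapsto 4/(4-\rho^2\omega)$ to show the iterates stay above $\omega_1^*$, and induct from $\omega_0=2\in(\omega_1^*,\omega_2^*)$ to get the monotone decrease. Your write-up is in fact slightly more complete, since you explicitly verify $1<\omega_1^*\leq 2\leq\omega_2^*$ and derive the bound $4-\rho^2\omega_i\in[2,4]$, both of which the paper's proof leaves implicit.
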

\begin{proof}
    If $L=\mu$, then $H$ is a scaled identity matrix, and the algorithm is simplified a lot. So we assume $L>\mu$ in what follows. 
    With $L> \mu>0$ we have  $\rho\in(0,1)$. Since $\omega_0=2$, we have $4-\rho^2\omega_0\geq 2$ and therefore $0<\omega_1\leq 2$. Repeating this argument and we see $\omega_i\in (0,2]$ for all $i$. By the definition of $\omega_i$, to show $\omega_i \leq \omega_{i-1}$ is to show
    \begin{align*}
        \frac{4}{4 - \rho^2 \omega_{i-1}} \leq \omega_{i-1} \Leftrightarrow g(\omega_i) \leq 0
    \end{align*}
    where $g$ is defined as $g(\omega) =\rho^2 \omega^2 - 4\omega + 4$. Note that $g(\omega)$ has two roots, $\omega_1^*$, as defined earlier, and $\omega_2^*=\frac{ 2(1 + \sqrt{1-\rho^2})}{\rho^2}$; $\omega_1^*,\omega_2^*$ are the two fixed points of the update \cref{eq:weight-update}. Observing that $\omega_0=2$ lies in the interval $(\omega_1^*, \omega_2^*)$, and moreover, for any $i\geq 1$, if $\omega_{i-1}> \omega_1^*$ we must have 
    \begin{align*}
        \omega_i = \frac{4}{4- \rho^2 \omega_{i-1}}  > \frac{4}{4- \rho^2 \omega_1^*} =\omega_1^*.
    \end{align*}
    This proves $\omega_i>\omega_1^*$ for all $i=1,\dots,r$. Next, since $\omega_0=2$ lies in the interval $(\omega_1^*, \omega_2^*)$ where $g(\omega)$ decreases, therefore we have $\omega_1\leq \omega_0$. Thus $\omega_1$ lies in $(\omega_1^*, \omega_2^*)$ again. We could then conclude inductively that $\omega_1^* < \omega_i\leq \omega_{i-1}$ for all $i=1,\dots,r$.
\end{proof}
From \cref{lemma:omega-decrease} we know that the update of $\omega_i$ in \cref{eq:weight-update} would not create much numerical concern in a forward pass, as we have $\omega_i\in[1,2]$ for all $i$. Furthermore, we can bound the rate at which $\omega_i$ converges to $\omega_1^*$:
\begin{lemma}\label{lemma:omega-rate}
    Define $\kappa:= \frac{L}{\mu}$. For any $i=1,\dots,r$, we have 
    \begin{align*}
        (\omega_i - \omega_1^*) \leq R^i \cdot (\omega_0 - \omega_1^*), 
    \end{align*}
    where $R$ is defined as 
    \begin{align*}
        R := \frac{\kappa-1}{\kappa+1} \cdot \frac{\sqrt{\kappa} -1 }{ \sqrt{\kappa}+1 }.
    \end{align*}
\end{lemma}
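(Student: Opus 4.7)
The plan is to reduce the statement to a one-step contraction $\omega_i - \omega_1^* \leq R(\omega_{i-1} - \omega_1^*)$ and then iterate. First I will subtract the fixed-point identity $\omega_1^* = 4/(4 - \rho^2 \omega_1^*)$ from the recursion $\omega_i = 4/(4 - \rho^2 \omega_{i-1})$ and clear denominators; a short calculation yields the exact one-step formula
\begin{equation*}
    \omega_i - \omega_1^* \;=\; \frac{4\rho^2}{(4 - \rho^2 \omega_{i-1})(4 - \rho^2 \omega_1^*)} \cdot (\omega_{i-1} - \omega_1^*).
\end{equation*}
By Lemma~\ref{lemma:omega-decrease} we have $\omega_{i-1} \in (\omega_1^*, 2]$, so this prefactor is positive and the ordering $\omega_i - \omega_1^* > 0$ is preserved inductively.

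Next I will upper bound the prefactor. Since $\omega_{i-1} \leq \omega_0 = 2$, I get $4 - \rho^2 \omega_{i-1} \geq 2(2 - \rho^2)$, while plugging the closed form $\omega_1^* = 2(1 - \sqrt{1-\rho^2})/\rho^2$ directly gives $4 - \rho^2 \omega_1^* = 2(1 + \sqrt{1 - \rho^2})$. Combining these, the contraction factor is at most
\begin{equation*}
    \frac{\rho^2}{(2 - \rho^2)(1 + \sqrt{1 - \rho^2})}.
\end{equation*}

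The main obstacle is showing this bound is itself at most $R$, whose definition in terms of $\kappa$ looks quite different on its face. The key algebraic identity I will use is $\rho^2 = (1 - \sqrt{1 - \rho^2})(1 + \sqrt{1 - \rho^2})$, which collapses the $(1 + \sqrt{1 - \rho^2})$ in the denominator against $\rho^2$ in the numerator and leaves $(1 - \sqrt{1 - \rho^2})/(2 - \rho^2)$. Since $\rho^2 \in (0, 1)$ we have $2 - \rho^2 \geq 1$, so this simplifies further to $1 - \sqrt{1 - \rho^2}$. Finally, substituting $\rho = (\kappa - 1)/(\kappa + 1)$ gives $\sqrt{1 - \rho^2} = 2\sqrt{\kappa}/(\kappa + 1)$, so $1 - \sqrt{1 - \rho^2} = (\sqrt{\kappa} - 1)^2 / (\kappa + 1)$; rewriting one factor of $(\sqrt{\kappa} - 1)$ as $(\kappa - 1)/(\sqrt{\kappa} + 1)$ matches this expression exactly with $R = \tfrac{\kappa - 1}{\kappa + 1} \cdot \tfrac{\sqrt{\kappa} - 1}{\sqrt{\kappa} + 1}$. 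Iterating the one-step contraction $i$ times then yields the claimed bound $\omega_i - \omega_1^* \leq R^i (\omega_0 - \omega_1^*)$.
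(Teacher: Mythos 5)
Your proof is correct and follows essentially the same route as the paper's: both start from the exact one-step identity $\omega_i - \omega_1^* = \tfrac{4\rho^2}{(4-\rho^2\omega_{i-1})(4-\rho^2\omega_1^*)}(\omega_{i-1}-\omega_1^*)$, invoke \cref{lemma:omega-decrease} to bound $\omega_{i-1}\le 2$, and reduce the contraction factor to $1-\sqrt{1-\rho^2}=R$ via the same $\kappa$-substitution. The only cosmetic difference is that you bound the denominator $4-\rho^2\omega_{i-1}\ge 2(2-\rho^2)$ directly and discard the extra factor $2-\rho^2\ge 1$, whereas the paper rewrites $\tfrac{1}{4-\rho^2\omega_{i-1}}=\tfrac{\omega_i}{4}$ via the recursion before applying $\omega_i\le\omega_{i-1}\le 2$; both land on the same bound.
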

\begin{proof}
    From the update rule of $\omega_i$ in \cref{eq:weight-update} and the fixed point property of $\omega_1^*$, we have
    \begin{align*}
        \omega_i - \omega_1^* &= \frac{4}{4 - \rho^2 \omega_{i-1}} - \frac{4}{4 - \rho^2 \omega_1^*} \\ 
        &=\frac{4\rho^2 }{(4 - \rho^2 \omega_{i-1})(4 - \rho^2 \omega_1^*)} \cdot (\omega_{i-1} - \omega_1^*) \\ 
        &\overset{\text{(i)}}{=}  \frac{\rho^2 \omega_1^* }{4 - \rho^2 \omega_{i-1}} \cdot (\omega_{i-1} - \omega_1^*) \\ 
        &\overset{\text{(ii)}}{\leq} \frac{\rho^2 w_{i-1} \omega_1^* }{4} \cdot (\omega_{i-1} - \omega_1^*) \\
        &\overset{\text{(iii)}}{\leq} \left(1-\sqrt{1-\rho^2} \right) \cdot (\omega_{i-1} - \omega_1^*) \\ 
        &\overset{\text{(iv)}}{=} \left( \frac{\kappa-1}{\kappa+1} \cdot \frac{\sqrt{\kappa} -1 }{ \sqrt{\kappa}+1 } \right) \cdot (\omega_{i-1} - \omega_1^*)
    \end{align*}
Here, (i) follows from the fact that $\omega_1^*$ is a fixed point, (ii) follows from \cref{lemma:omega-decrease} that $\omega_i \leq \omega_{i-1}$, (iii) follows from the definition of $\omega_1^*$ and the fact $w_{i-1}\leq 2$, and (iv) follows from the definitions of $\kappa$ and $\rho$. The proof is concluded by unrolling the above recurrence.
\end{proof}
\begin{remark}\label{remark:omega_rate}
    Here, we call $R$ the linear convergence rate (or \textit{contraction factor}) of $\omega_i$ to $\omega_1^*$. First-order methods for solving $H\xi = q$ converge at most at a rate $R_a:=\frac{\sqrt{\kappa} -1 }{ \sqrt{\kappa}+1 }$, and we see $\omega_i$ converges at an even faster rate. Numerically, assuming $\kappa=\frac{L}{\mu}=\frac{1.02}{0.02}=51$, we then have:
    \begin{align*}
        R&\approx 0.7253,\quad R^5 \approx 0.2, \quad R^{10} \approx 0.04,\quad  R^{20} \approx 0.0016, \quad R^{30} \approx 6\times 10^{-5} \\ 
        R_a&\approx 0.7543, \quad  R_a^5 \approx 0.244, \quad R_a^{10} \approx 0.0597, \quad R_a^{20} \approx 0.0036, \quad R_a^{30} \approx 0.0002.
    \end{align*}
    Thus, with $\kappa=51$, the update of $\omega_i$ in \cref{eq:weight-update} converges in at most 20 iterations up to the bfloat16 precision.    
\end{remark}


\subsection{Backward Pass}
\label{appendix: backward CH}
We now give details for backpropagation through the Chebyshev Iteration (\cref{algo:Chebyshev}) via implicit differentiation. 

\myparagraph{Computing  $\frac{dL}{dq_t}$ and $\frac{dL}{dk_t}$} First, we follow \cref{table:implicit-diff} and \cref{lemma:CH-autoq=CH-implq}, and compute $dq_t$. Then, given the equation $(H_t + \lambda_t I )dq_t = d x_t^* $, we have that 
\begin{align}
    d(H_t + \lambda_t I) = - dq_t  (x_t^*)^\top. 
\end{align}
Therefore $d\lambda_t = \text{tr}( d(H_t + \lambda_t I) ) = -(x_t^*)^\top  dq_t $. Since we set $\lambda_t=  a\cdot \| H_t \|_{\text{F}}$, this indicates
\begin{align}\label{eq:dH_t-implicit}
    d H_t = -dq_t  (x_t^*)^\top - a \cdot \frac{H_t}{\| H_t \|_{\text{F}} } \cdot \left((x_t^*)^\top  dq_t \right).
\end{align}
Note that this expression of $dH_t$ is \textit{partial}: It accounts for the upstream gradient from $dq_t$ only and one might think of the subsequent states all depend on $H_t$. We will accumulate the gradients later when needed.

Now, the recursion of $H_t$ in \cref{eq:ch-forward} implies
\begin{align}\label{eq:dk_i}
    d k_i &=  \sum_{t\geq i}  \left( d H_t  + (d H_t)^\top \right) k_i \cdot \frac{\zeta_t}{\zeta_i} \\ 
    &=\sum_{t\geq i}  \frac{\zeta_t}{\zeta_i}  \left( -dq_t \otimes (x_t^*)^\top - x_t^* \otimes (dq_t)^\top + w_t H_t  \right) k_i, 
\end{align}
which proves \cref{lemma:dk_i}. We refer the reader to \cref{subsubsection:dqdk} for more detailed derivations of $dq_t$ and $d k_t$.

\myparagraph{Derivatives for Gating} In practice we often parameterize $\gamma_t$ in the log space to ensure numerical stability. Thus, let us first revise our notations for this case. Let $g_t=\log \gamma_t $ and $G_t:=\sum_{i=1}^t g_i = \log\left( \prod_{i=1}^t \gamma_i \right)$. Then the mask matrix $M$ is 
\begin{align}
    M_{i,j} = \begin{cases}
        \exp( G_{j}- G_i) & j\geq i; \\ 
        0 & \text{otherwise}.
    \end{cases}
\end{align}
Now, since for any $c=1,\dots,C$ we have
\begin{align}
    H_c = \exp(G_c) \cdot H_{0} + \sum_{j=1}^c \exp(G_c- G_j)  \cdot  k_{j} k_{j}^\top, 
\end{align}
for any $G_i$ we have the following basic derivatives:
\begin{align}
    \frac{d H_c}{d G_i} &= \begin{cases}
        0  & c < i; \\ 
        \exp(G_i) \cdot H_{0} + \sum_{j=1}^{i-1} \exp(G_i- G_j)  \cdot  k_{j} k_{j}^\top & c = i; \\ 
       - \exp (G_c - G_i) k_i k_i^\top & c > i,
    \end{cases} \quad \quad i=1,\dots, C, \quad c = 1,\dots, C; \\ 
    \frac{d H_{C+1}}{d G_i} &=  - \exp (G_{C+1} - G_i) k_i k_i^\top \quad \quad \quad   i=1,\dots, C
\end{align}
With $dH_{C+1}$ being the aggregated gradient from the future, we  have for $i=1,\dots,C$ that
\begin{align}
    dG_i &=   \sum_{c = i }^{C+1} \langle dH_{c}, \frac{d H_c}{ d G_i } \rangle \\
     &= e^{G_i} \langle dH_{i}, H_0 \rangle +   \sum_{j=1}^{i-1} e^{G_i- G_j}  \langle dH_i, k_{j} k_{j}^\top \rangle - \sum_{c=i+1}^C e^{G_c - G_i} \langle dH_c, k_{i} k_{i}^\top \rangle - e^{G_{C+1} - G_i}  \langle dH_{C+1}, k_{i} k_{i}^\top \rangle \\ 
     &= e^{G_i} \langle dH_{i}, H_0 \rangle +   \sum_{j=1}^{i} e^{G_i- G_j}  \langle dH_i, k_{j} k_{j}^\top \rangle - \sum_{c=i}^C e^{G_c - G_i} \langle dH_c, k_{i} k_{i}^\top \rangle - e^{G_{C+1} - G_i}  \langle dH_{C+1}, k_{i} k_{i}^\top \rangle \\ 
     dG_{C+1} &= e^{G_{C+1}} \langle dH_{C+1}, H_0 \rangle +   \sum_{j=1}^{C} e^{G_{C+1}- G_j}  \langle dH_{C+1}, k_{j} k_{j}^\top \rangle 
\end{align}
Note that in one of the above equations we add and subtract the term $\langle dH_i, k_ik_i^\top \rangle$, which will simplify the implementation.

Recall that $d H_t = -dq_t  (x_t^*)^\top - \frac{1}{2}\cdot w_t H_t$ with $w_t=\frac{ 2 a  \cdot (x_t^*)^\top  dq_t}{\| H_t \|_{\text{F}} }$. In computing the derivatives of $G_i$ the first term $dq_t  (x_t^*)^\top$ is the standard term that arises in that of \cref{eq:linear-attn}, which we omit here. We now focus on the second term $\frac{1}{2}\cdot w_t H_t$. This implies the gradients $dG_i$ and $dG_{C+1}$ are partly given respectively by (using the notations in \cref{eq:CH-backward-mainpaper} and omitting some algebraic operations)
\begin{align}
    \frac{1}{2}\cdot \langle w_i H_i, H_i \rangle - \frac{1}{2}\cdot k_i^\top B_i k_i \quad \quad \text{and} \quad \quad \frac{1}{2}\cdot e^{G_C} \langle \widetilde{B}_{C+1}, H_0 \rangle +  \frac{1}{2}\cdot \sum_{j=1}^C e^{G_C - G_j} \cdot k_j^\top  \widetilde{B}_{C+1}k_j.
\end{align}
Computing the first term $\langle w_i H_i, H_i \rangle$ in parallel is easy by invoking the definition of $w_i$ and the Frobenius norm of $H_i$ we stored during the forward pass. Computing the quadratic terms $k_i^\top B_i k_i$ and $ k_j^\top  \widetilde{B}_{C+1}k_j$ in parallel is easy and follows from our computation of $B_ik_i$ and $\widetilde{B}_{C+1}k_i$ for $dk_i$ in \cref{eq:CH-backward-mainpaper}. Computing $\langle \widetilde{B}_{C+1}, H_0 \rangle$ is easy since we recompute the initial states $H_0$ of each chunk and have them available during the backward pass, while $\widetilde{B}_{C+1}$ is updated backwards in a for loop.

\subsubsection{Computing  $\frac{dL}{dq_t}$ and $\frac{dL}{dk_t}$.}\label{subsubsection:dqdk}

In forward pass we solve \[(H_t + \lambda_tI)x_t = q_t\]
\begin{equation}
    \begin{aligned}
        x_t &= (H_t + \lambda_tI)^{-1}q_t \\
        \implies dx_t &= \underset{J_{q_t \to x_t}}{\underbrace{(H_t + \lambda_tI)^{-1}}}dq_t
    \end{aligned}
\end{equation}
Recall that the gradient is transpose of the Jacobian, thus we obtain
\begin{equation}
    \label{eq. implicity dl/dq}
    \frac{dL}{dq_t} = (H_t + \lambda_tI)^{-1}\frac{dL}{dx_t}.
\end{equation}
Thus, we can obtain $\frac{dL}{dq_t}$ by running a Chebyshev iteration to solve (for $z$) the linear system of equations
\[(H_t + \lambda_tI)z = \frac{dL}{dx_t}.\]
Now we have
\begin{equation}
\begin{aligned}
\label{eq: dH}
    dx_t &= d(H_t + \lambda_tI)^{-1}q_t \\
    dx_t &= -(H_t + \lambda_tI)^{-1}d(H_t + \lambda_tI)(H_t + \lambda_tI)^{-1}q_t \\
    dx_t &= -(H_t + \lambda_tI)^{-1}d(H_t + \lambda_tI)x_t \\
    &= (x_t^\top  \otimes -(H_t + \lambda_tI)^{-1}) \textrm{vec}(d(H_t + \lambda_tI))
    \end{aligned}
\end{equation}
In the last equality we have used the identity $\textrm{vec}(ABC) = (C^\top  \otimes A)\textrm{vec}(B)$.

Now we will compute the Jacobian of $\lambda$ with respect to $H_t$:
\begin{equation}
\label{eq: dlambda}
    \begin{aligned}
        \lambda_t &= a||H_t||_F \\
        &= a \sqrt{\textrm{Tr}(H_t^\top H_t)} \\
    \implies d\lambda &= a d\Big(\sqrt{\textrm{Tr}(H_t^\top H_t)}\Big) \\
    &= a \frac{1}{2||H_t||_F}\textrm{Tr}((dH_t)^\top H_t + H_t^\top  (dH_t)) \\
    &= a \frac{1}{||H_t||_F}\textrm{vec}(H_t)^\top d\textrm{vec}(H_t)\\
    \end{aligned}
\end{equation}

Substituting \eqref{eq: dlambda} in \eqref{eq: dH}.
\begin{equation}
\begin{aligned}
\label{eq: dH_full}
    dx_t 
    &= (x_t^\top  \otimes -(H_t + \lambda_tI)^{-1}) \Big(\textrm{vec}(dH_t) + \frac{a}{||H_t||_F}\textrm{vec}(I)\textrm{vec}(H_t)^\top \textrm{vec}(dH_t))\Big) \\
    \end{aligned}
\end{equation}

Thus, we can obtain $\textrm{vec}(\frac{dL}{dH_t})$ as,
\begin{equation}
    \textrm{vec}(\frac{dL}{dH_t}) = (x_t \otimes -(H_t + \lambda_tI)^{-1})\frac{dL}{dx_t} +\frac{a}{||H_t||_F}\textrm{vec}(H_t)\textrm{vec}(I)^\top (x_t \otimes -(H_t + \lambda_tI)^{-1})\frac{dL}{dx_t}
\end{equation}
Substituting from \eqref{eq. implicity dl/dq}, 
\begin{equation}
\begin{aligned}
    \textrm{vec}(\frac{dL}{dH_t}) &= -(x_t \otimes \frac{dL}{dq_t}) - \frac{a}{||H_t||_F}\textrm{vec}(H_t)\textrm{vec}(I)^\top (x_t \otimes \frac{dL}{dq_t}) \\
    &= -(x_t \otimes \frac{dL}{dq_t}) - \frac{a}{||H_t||_F}\textrm{vec}(H_t)\langle \frac{dL}{dq_t}, x_t \rangle
\end{aligned}
\end{equation}

Now, with gating, we have $H_t = \gamma_t H_{t - 1} + k_tk_t^\top $. Which can be unrolled as 
\begin{equation}
    \label{eq: cumulative dynamics}
    H_l = \sum_{i=0}^l \Big(\prod_{k=i}^{l-1} \gamma_k \Big) k_i k_i^\top 
\end{equation}

We will compute $\frac{dL}{dk_l}$ for some $l \leq t$, 
\begin{equation}
\label{eq: dk equation}
    \begin{aligned}
        \frac{dL}{dk_l} = \sum_{t \geq l}\frac{d\textrm{vec}(H_t)}{dk_l}\textrm{vec}(\frac{dL}{dH_t})
    \end{aligned}
\end{equation}
Computing $\frac{d\textrm{vec}(H_t)}{dk_l}$ for some $t \geq l$

\begin{equation}
    H_t = \prod_{i=l}^{t-1}\gamma_i k_l k_l^\top  + \textrm{terms indep. of $k_l$.}
\end{equation}
Taking differentials on both sides,
\begin{equation}
\begin{aligned}
\label{eq: Jacobian H wrt k}
    dH_t &= \prod_{i=l}^{t-1}\gamma_i \Big[dk_l k_l^\top  + k_l (dk_l)^\top \Big] \\
    d(\textrm{vec}(H_t)) &=  \prod_{i=l}^{t-1}\gamma_i \Big[\textrm{vec}(dk_l k_l^\top )  + \textrm{vec}(k_l (dk_l)^\top )\Big] \\
    &= \underset{J_{k_l \to \textrm{vec}(H)}}{\underbrace{\Big(\prod_{i=l}^{t-1}\gamma_i \Big)\Big[(k_l \otimes I)  + (I \otimes k_l) \Big]}}dk_l
\end{aligned}
\end{equation}
where in the last equality we used the identity $\textrm{vec}(k_l dk_l^\top ) = dk_l \otimes k_l = (I \otimes k_l)dk_l $.

Subsituting the Jacobian (transposed for gradients) from \eqref{eq: Jacobian H wrt k} to \eqref{eq: dk equation} we obtain.

\begin{equation}
\label{eq: dk equation final}
    \begin{aligned}
        \frac{dL}{dk_l} = \sum_{t \geq l}\Big(\prod_{i=l}^{t-1}\gamma_i \Big)\Big[(k_l^\top  \otimes I)  + (I \otimes k_l^\top ) \Big]\textrm{vec}(\frac{dL}{dH_t})
    \end{aligned}
\end{equation}

Substituting the expression for $\text{vec}(\frac{dL}{dH_t})$ into equation \eqref{eq: dk equation final} we get:

\begin{equation}
    \begin{aligned}
        \frac{dL}{dk_l} & = -\sum_{t \geq l}\left(\prod_{i=l}^{t-1}\gamma_i \right) &
        \Big[(k_l^\top  \otimes I)  + (I \otimes k_l^\top )\Big]
        \Big[(x_t \otimes \frac{dL}{dq_t}) + \frac{a}{||H_t||_F}\textrm{vec}(H_t)\langle \frac{dL}{dq_t}, x_t \rangle \Big] = \\
         & = -\sum_{t \geq l}\left(\prod_{i=l}^{t-1}\gamma_i \right) & \Big[ \Big( 
         (k_l^\top  \otimes I)(x_t \otimes \frac{dL}{dq_t}) + 
         \frac{a}{||H_t||_F} (k_l^\top  \otimes I) \textrm{vec}(H_t)\langle \frac{dL}{dq_t}, x_t \rangle \Big) + \\
         & & 
         (I \otimes k_l^\top )(x_t \otimes \frac{dL}{dq_t}) + 
         \frac{a}{||H_t||_F} (I \otimes k_l^\top )  \textrm{vec}(H_t)\langle \frac{dL}{dq_t}, x_t \rangle \Big]
    \end{aligned}
\end{equation}

Note that the following equations hold: 
\begin{equation}
    \begin{aligned}
        (k_l^\top  \otimes I)(x_t \otimes \frac{dL}{dq_t}) & = (k_l^\top  x_t \otimes \frac{dL}{dq_t}) = \langle k_l, x_t \rangle \frac{dL}{dq_t} \\ 
        (I \otimes k_l^\top )(x_t \otimes \frac{dL}{dq_t}) & = x_t \otimes k_l^\top  \frac{dL}{dq_t} = \langle k_l, \frac{dL}{dq_t} \rangle x_t
    \end{aligned}
\end{equation}
since $(A \otimes B)(C \otimes D) = AC \otimes BD$ and the fact that the Kronecker products after the simplification is a scalar times a vector. 

For the other terms is holds:
\begin{equation}
    \begin{aligned}
         (k_l^\top  \otimes I) \textrm{vec}(H_t) & = \textrm{vec}(H_t k_l) \\
         (I \otimes k_l^\top )  \textrm{vec}(H_t) & = \textrm{vec}(k_l^\top  H_t) = \textrm{vec}(H_t^\top  k_l) 
    \end{aligned}
\end{equation}
where we used the fact $\textrm{vec}(AXB) =  (B^\top  \otimes A)\textrm{vec}(X)$ and the fact that the $\textrm{vec}$ operator applied to a row vector returns the same result as applying it on its transpose (so we go from $k_l^\top  H_t$ to $H_t^\top  k_l$). Since $H_t$ is symmetric we can sum both contributions and get twice that amount. 

Eventually we get: 

\begin{equation}
\boxed{
\begin{aligned}
\frac{dL}{dk_l} = -\sum_{t \geq l}\left(\prod_{i=l}^{t-1}\gamma_i \right)\Bigg[&\langle k_l, x_t \rangle \frac{dL}{dq_t} + \langle k_l, \frac{dL}{dq_t} \rangle x_t + \frac{2a}{||H_t||_F}\langle x_t, \frac{dL}{dq_t} \rangle H_t k_l\Bigg]
\end{aligned}
}
\end{equation}

Or equivalently, collecting the terms that are linear in the gradient:

\begin{equation}
\boxed{
\frac{dL}{dk_l} = -\sum_{t \geq l}\left(\prod_{i=l}^{t-1}\gamma_i \right)\left[\langle k_l, x_t \rangle \frac{dL}{dq_t} + \langle k_l, \frac{dL}{dq_t} \rangle x_t + \frac{2a\langle x_t, \frac{dL}{dq_t} \rangle}{||H_t||_F} H_t k_l\right]
}
\end{equation}
Note: The last term creates a dependence on $k_l$ through $H_t k_l$, which is expected since the regularization $\lambda_t$ couples the gradient computation.

\newpage

\section{Proof of \cref{lemma:CH-autoq=CH-implq}}
\label{appendix:proof-implicit-explicit-dq}
\cref{lemma:CH-autoq=CH-implq} describes an interesting phenomenon where, for the CH method (\cref{algo:Chebyshev}), the gradient $d\hat{q}$ obtained from implicit differentiation coincides with the exact gradient $dq$ obtained via backpropagation (chain rule). To prove this result, one way is to derive an analytic expression for $dq$ (\cref{subsection:CH-BP-details}) and then inspect the recursions. However, this can be algebraically involved. Here, we present a clear proof based on some simple observations.

First, note that the output $\xi_r$ is linear in $q$ and moreover there is a matrix function $p_r(H)\in \bbR^{D\times D}$ such that
\begin{align}
    \xi_r = p_r(H) \cdot q.
\end{align}
Here $p_r(H)$ is a polynomial function of $H$ that encodes the Chebyshev iteration (\cref{algo:Chebyshev}). Conversely,  we understand that $p_r(H) \cdot q$ can be computed by applying the Chebyshev iteration with $H, q$ for $r$ iterations (together with other parameters such as $\mu,L$). Then, given the output gradient $d\xi_r$, we have
\begin{align}
    dq = p_r(H)^\top \cdot  d\xi_r = p_r(H) \cdot d\xi_r,  
\end{align}
where the last equality follows, since $H$ is symmetric, which implies $p_r(H)$ is symmetric. The proof is finished by observing that $p_r(H) \cdot d\xi_r$ can be computed via \cref{algo:Chebyshev} with $H=H, q=d\xi_r$ and other parameters, which gives us $dq$.

\subsection{The Exact Backward Pass for $dq$ and $dH$}\label{subsection:CH-BP-details}

Here we show how to obtain the exact gradients of $dH$ and $dq$ in \cref{algo:Chebyshev} given the output gradient $d\xi_r$, which might be of independent interests. The key insight here is that the Chebyshev iteration can be \textit{reversed}.

\myparagraph{Backward Pass for dq} Let $I$ be the identity matrix of suitable size. To derive a backward pass of \cref{algo:Chebyshev}, we first write down the update of $\xi_i$ concisely in the following recursion
\begin{align}\label{eq:3recursion}
    \xi_i = A_i \xi_{i-1} + b_i \xi_{i-2} + c_i q,
\end{align}
where $A_i,b_i,c_i$ are defined as 
\begin{align}\label{eq:Abc}
    A_i = \omega_i I - \frac{2\cdot \omega_i}{ L+\mu } H, \quad b_i = -(\omega_i-1), \quad c_i = \frac{2\cdot \omega_i}{ L+\mu }.
\end{align}
Note that $A_i$ is symmetric. Define $d \xi_i:= \frac{d \cL }{d \xi_i}$ for every $i$. With some loss function $\cL$, assume we are now given $d \xi_r$, and our goal is to compute $d q := \frac{d \cL }{d q}$. Since $q$ appears in \cref{eq:3recursion} for every $i$, we know $\xi_0, \xi_1,\dots,\xi_r$ all depend on $q$. Therefore, with $c_0:= \frac{2}{L+\mu}$, we have
\begin{align*}
    dq =  \sum_{i=0}^r  c_i \cdot  d \xi_i.
\end{align*}
It remains to compute $d \xi_i$ for every $i$. Applying the chain rule to \cref{eq:3recursion}, we obtain 
\begin{equation}\label{eq:dxi}
    \begin{split}
        d \xi_{r-1} &= A_r \cdot d\xi_r \\  
    d \xi_{i-2} &= A_{i-1} \cdot d\xi_{i-1} + b_i  \cdot d \xi_i, \quad \forall i=r,\dots, 2.
    \end{split}
\end{equation}
Note that $A_i,b_i,c_i$ depend on some constant terms and $\omega_i$. Thus, to compute them backward we assume access to $\omega_r$ and these constants. By reversing \cref{eq:weight-update} we derive the following recursion:
\begin{equation}\label{eq:weight-update-backward}
    \begin{split}
        \nu_r  &\gets \omega_r\\
    \nu_{i-1} &\gets \frac{4}{\rho^2} \left( 1- \frac{1}{\nu_i} \right), \quad \forall i=r,\dots, 1.
    \end{split}
\end{equation}
Similarly to how $\omega_i$ decreases with $i$ and converges to $\omega_1^*$, we may prove $\nu_i$ is convergent to the other fixed point, $\omega_2^*$, as $i$ decreases (and the iterate does not stop at $i=1$).

\myparagraph{Backward Pass for dA} From \cref{eq:3recursion} and \cref{eq:Abc} we see that 
\begin{align}\label{eq:dA}
    dA_i = d \xi_i \otimes \xi_{i-1}^\top, \quad d H = - \frac{2\cdot \omega_i}{L+\mu} \cdot  d A_i
\end{align}
where $\otimes$ denotes the Kronecker product; this is the out product of $d \xi_i$ and $\xi_{i-1}^\top$, as $d \xi_i$ and $\xi_{i-1}$ are vectors.

\myparagraph{Reverse Chebyshev Iteration} At first glance, computing $dA_i=d \xi_i \otimes \xi_{i-1}^\top$ requires storing $\xi_{i-1}$ in the forward pass, and the actual calculation of $dA_i$ is done after we run the backward pass for $d \xi_i$ in \cref{eq:dxi}. However, storing all $\xi_i$'s would be memory-inefficient. To address this issue, a main insight here is that we can reverse  \cref{eq:3recursion} and write
\begin{align}\label{eq:xi_backward}
   \xi_{i-2} = \frac{1}{b_i}  ( \xi_i -  A_i \xi_{i-1} + c_i q).
\end{align}
This implies that we can recover all the iterates $\xi_{r},\dots,\xi_0$ as soon as we have access to the last two, $\xi_r,\xi_{r-1}$. Therefore, to obtain $dA_i$, we can run two iteration schemes in \cref{eq:dxi} and \cref{eq:xi_backward} simultaneously.

\begin{remark}
    We find that being able to run the iterative update \textit{backward} in a numerically stable fashion is a main feature of the Chebyshev iterative method (or more generally, gradient descent variants with momentum). Vanilla gradient descent can not efficiently reverse its iterate $\xi_i = \xi_{i-1} - \gamma_i (H \xi_{i-1} - q)$ with stepsize $\gamma_i$, as it requires inverting $(I-\gamma_i H)$. Moreover, reversing \cref{eq:xi_backward} can be done stably, as $b_i$ is often in a good numerical range, which means division by $b_i$ in \cref{eq:xi_backward} is not an issue. To see this, first note that by \cref{lemma:omega-decrease} we have 
    \begin{align*}
        1\geq -b_i=\omega_i - 1 \geq \omega_1^* - 1.
    \end{align*}
    Note that $\omega_1^*$ defined in \cref{lemma:omega-decrease} is an increasing function of $\rho$ and therefore of $\kappa$. We then have that $-b_i\in [0.25, 1]$ for any $\kappa \geq 10$ (we will not consider the case $\kappa<10$ as this means we need to add a very large regularization strength which might harm the minimization of the regression loss). In comparison, if we were to reverse the CG iteration, we would need to divide a quantity that is often numerically as small as $10^{-3}$ or as large as $10^{10}$ (see \cref{fig:reverse_division}). This is why it is numerically unstable to reverse CG.
\end{remark}

\begin{figure}
    \centering
    \includegraphics[width=0.49\linewidth]{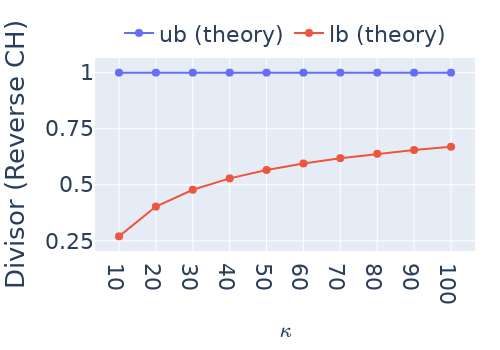}
    \includegraphics[width=0.49\linewidth]{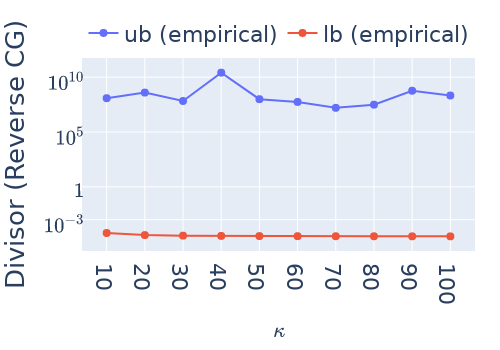}

    \makebox[0.49\textwidth]{\footnotesize \quad \quad \ \ (a)  }
    \makebox[0.49\textwidth]{\footnotesize \quad \quad (b) }
    
    \caption{(a) The theoretical lower and upper bounds for the values of the divisor $b_i$ that arise in reversing Chebyshev \cref{eq:xi_backward}; (b) The empirical lower and upper bounds for the divisor that arises in reversing CG. }
    \label{fig:reverse_division}
\end{figure}

\begin{algorithm}[t]
    \SetAlgoLined
    \DontPrintSemicolon

    \texttt{Input}: $H, d \xi_{r}$, $L,\mu$, , number of iterations $r$, the final weight $\omega_r$; 

    Initialize $\rho\gets \frac{L-\mu}{L+\mu}, d \xi_{r+1}\gets0$, $\nu_r \gets \omega_r, \nu_{r+1}\gets0, \nu_0\gets 1$ , $dq\gets \frac{2\nu_{r}}{L+\mu} \cdot d\xi_r$, $dH\gets -  \frac{2\nu_{r}}{L+\mu} d \xi_r \otimes \xi_{r-1}^\top$;

    For $i=r,\dots,1$: 
    \begin{align}
        \xi_{i-2} & \gets -\frac{1}{\nu_i-1}  \left( \xi_i -  \left(\nu_i I - \frac{2\cdot \nu_i}{ L+\mu } H \right) \xi_{i-1} + \frac{2\cdot \nu_i}{ L+\mu } q \right) \\ 
        d \xi_{i-1} &\gets \left( \nu_i \cdot d \xi_i - \frac{2\cdot \nu_i}{ L+\mu } \cdot H  \cdot d\xi_{i}  \right)-(\nu_{i+1}-1)  \cdot d \xi_{i+1} \\ 
        \nu_{i-1} & \gets \frac{4}{\rho^2} \left( 1- \frac{1}{\nu_i} \right) \\ 
        d q &\gets d q +  \frac{ 2\nu_{i-1} }{L+\mu} \cdot d \xi_{i-1} \\ 
        dH &\gets dH - \frac{2\nu_{i-1}}{L+\mu} \cdot d \xi_{i-1} \otimes \xi_{i-2}^\top \label{eq:dH-1}
    \end{align}
    \texttt{Output}: $d q$, $dH$;
    
    \caption{Backward Pass of Chebyshev Iteration } 
    \label{algo:Chebyshev-backward}
\end{algorithm}

\section{Experimental Setup}
\label{sec: Model architecture}
\myparagraph{Model Configurations} We consider models of 3 different sizes: 440M, 1B, and 2.8B. This is summarized in \cref{table:model_size_arch}. All models are with the GPT2 tokenizer similarly to \cite{Von-arXiv2025-mesa}.

\begin{table}[]
    \centering
    \caption{Model sizes and the corresponding architectural configurations.}
    \begin{tabular}{cccc}
     \toprule
     Model Size & Number of Layers & Number of Heads & Hidden Dimension  \\  
     \midrule 
      440M & 28 & 8 & 1024  \\
      1B   & 28 & 12 & 1536 \\ 
      2.8B & 32 & 20 & 2560 \\
      \bottomrule
    \end{tabular}
    \label{table:model_size_arch}
\end{table}

\myparagraph{Training Configurations} 
All models are trained with the AdamW optimizer with initial learning rate $10^{-3}$, $5\%$ warm-up steps, cosine schedule, gradient clipping with maximum norm $1$. 


\begin{table}[]
    \centering
    \caption{Model sizes and the corresponding architectural configurations.}
    \begin{tabular}{cccc}
     \toprule
     Model Size & Global Batch Size & Total Number of Training Tokens & Sequence Length\\  
     \midrule 
      440M & 1M & 8B  & 2048  \\
      1B   & 2M & 20B & 2048 \\ 
      2.8B & 2M & 100B & 4096 \\
      \bottomrule
    \end{tabular}
    \label{table:model_size_training}
\end{table}

Models of the same scale use the same training configurations. Specifically (see also \cref{table:model_size_training}):
\begin{itemize}
    \item For 440M models, we use sequence length 2048 and 8B DCLM tokens.
    \item For 1B models, we use sequence length 2048 and 20B DCLM tokens.
    \item For 2.8B models, we use sequence length 4096 and 100B DCLM tokens.
\end{itemize} 

\myparagraph{Model Hyperparameters} We use default parameters for all other models as given in the  \href{https://github.com/fla-org/flash-linear-attention}{Flash-Linear-Attention v0.4.0} library (except the ones mentioned in \cref{table:model_size_arch}). For our approach, we use $\lambda_t = 0.02 \cdot \| H_t \|_{\text{F}}$, with gating and $\alpha$-connection enabled by default, unless otherwise specified. We also run CH for $30$ iterations for all experiments.

\myparagraph{Individual Experiments} We now describe the setups for each individual experiment. 

In \cref{fig:cgch-grad}a, we randomly generate tensors $k\in \bbR^{B\times T\times H\times D}$ and $q$ and normalize them along the last dimension ($D$). Here $B,T, H,D$ simulate the batch size, sequence length, number of heads, and head dimension, respectively. Then we compute the covariance matrices $H\in \bbR^{B\times T\times H\times D\times D}$ of $k$, normalize its every $D\times D$ slice by its Frobenius norm. The code to generate data is shown below.

\begin{verbatim}
    k = torch.randn(B, T, H, D).to(dtype).to('cuda')
    q = torch.randn(B, T, H, D).to(dtype).to('cuda')

    q = q / torch.linalg.vector_norm(q, dim=-1, keepdim=True).to(q)
    k = k / torch.linalg.vector_norm(k, dim=-1, keepdim=True).to(k)

    kk = torch.einsum('...i,...j->...ij', k, k).cumsum(1)  
    kk = kk / torch.linalg.matrix_norm(kk, ord='fro')[..., None, None]

    kk.diagonal(dim1=-2, dim2=-1).add_(ridge_strength)
\end{verbatim}
For \cref{fig:cgch-grad}c and \cref{fig:cgch-grad}e, we generate random input ids with vocabulary size 5000, sequence length 2048 within a 5-layer LLAMA; we set 2 heads and head dimension 128 for this architecture.

In the MQAR experiments of \cref{fig:mqar+fig:speed}a, we follow the standard experimental setting but consider a strictly harder setting with smaller model dimension (or hidden dimension). Indeed, in the setting of \cite{arora2023zoology}, the model dimension is always larger than or equal to the number of KV pairs, while in the setting here, in some cases the model dimension is smaller than the number of KV pairs, in which case linear SSMs could not perfectly memorize all KV pairs.

In the main paper, \cref{fig:mqar+fig:speed}a is without any gating or $\alpha$-connection.



In \cref{fig:longctx_ssm} we considered the following tasks for long context evaluations. Reported results for each task is average over the score obtained for individual datasets in that task.
\begin{itemize}
    \item \textit{Retrieval-Augmented Generation (RAG)}: These tasks consist of open-domain question answer where the model is given a gold passage (passage containing the answer) interspersed between many other retrieved passages from a corpus \cite[Wikipedia dump split into 100-word passages]{petroni2021kilt}. The model is tasked with answering the question based on the obtained passages. We consider the following datasets from HELMET \cite{yen2025helmet} for this task: Natural Questions, TriviaQA, PopQA, HotpotQA.
    \item \textit{Many-shot In-Context Learning (ICL)}: ICL tests LLMs ability to learn new skills from a few examples. Here the task is to learn to classify between different concepts based on several in-context examples of the said concept. We consider the following datasets from HELMET \cite{yen2025helmet} for this task: TREC Coarse, TREC Fine, NLU, BANKING77, CLINIC150.
    \item \textit{Synthetic Recall}: These tasks are variations of the ``Needle-in-a-Haystack" task \cite{needle2024haystack} where the goal is to retrieve an important piece of information, the ``needle" from a long context of distractor tokens, the ``haystack". These variations also test multi-hop tracing and aggregation capabilities of the model.
    We consider the following datasets from RULER \cite{hsieh2024ruler} for this task: S-NIAH-{1/2/3}, MK-NIAH-{1,2,3}, MV-NIAH, MQ-NIAH, VT, CWE, FWE.
    \item \textit{LongQA}: These are long document based question-answering tasks. The documents are typically made long by randomly sampling different paragraphs from the same dataset along with the paragraph that contains the answer. We consider the following datasets from RULER \cite{hsieh2024ruler} for this task: SQuAD, HotpotQA.
\end{itemize}

\newpage
\section{Synthetic Recall Without S-NIAH-1}\label{section:no-S-NIAH}
As noted in the main paper, synthetic recall tasks differ significantly from natural text. For instance, in \textit{S-NIAH-1}, 19 of 20 words can repeat 10,000+ times while ground-truth appears once. GKA's regression objective then overweights irrelevant tokens, hindering retrieval. Excluding this S-NIAH-1 task, GKA is competitive on synthetic recall (10 remaining RULER tasks) with better average (over RAG, ICL, Recall \& LongQA) performance compared to other SSMs, as shown in \cref{fig:Recall_GKA}.

\begin{figure}[t]
    \centering
    \includegraphics[width=0.7\textwidth]{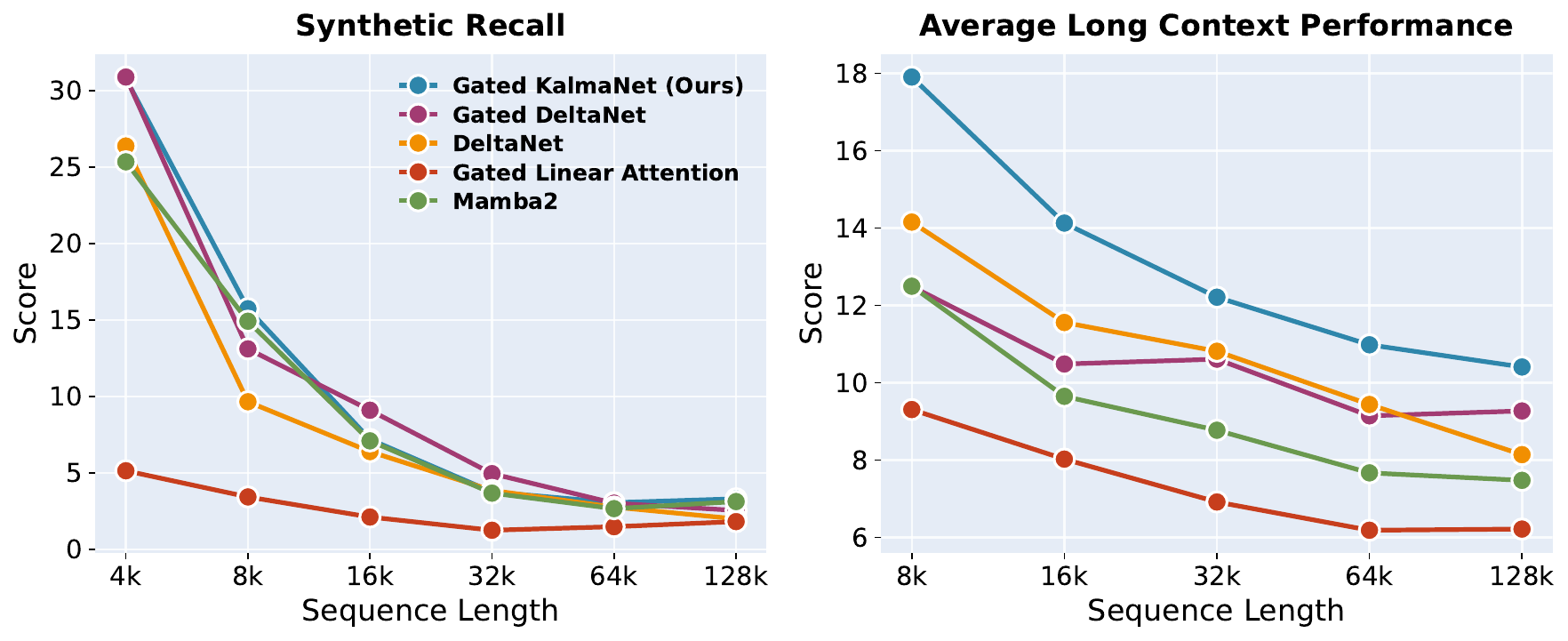}
    \vspace{-0.1cm}
    \caption{Recall and average performance without \textit{S-NIAH-1}.}
    \label{fig:Recall_GKA}
\end{figure}

\section{Throughput for Long Sequences}\label{section:throughput-long-seq}
\cref{table:throughput} profiles GKA's throughput across model scales and context lengths. Due to its linear-time complexity, GKA maintains nearly constant throughput as sequence length increases, given a fixed batch size of 4M tokens.

\begin{table}[htbp]
\centering
\caption{Throughput in tokens/GPU/sec. GKA achieves up to 2$\times$ higher throughput than Transformers at 64K context length.}
\label{table:throughput}
\small
\resizebox{0.5\textwidth}{!}{
\begin{tabular}{l|l|ccc}
\toprule
\textbf{Scale} & \textbf{Model} & \textbf{16K} & \textbf{32K} & \textbf{64K} \\
\midrule
\midrule
8B & Transformer & 6096.37 & 4766.25 & 3382.50 \\
& GKA & 6898.53 & 6808.94 & 6721.64 \\
\midrule
14B & Transformer & 2833.99 & 2221.56 & 1485.24 \\
& GKA & 3318.28 & 3256.45 & 3256.45 \\
\bottomrule
\end{tabular}
}
\end{table}

\newpage

\section{How Does The Performance \ourshortname Scale with Compute?}\label{appendix: GKA scaling laws}
We consider models at three different scales: 440M, 1B and 2.8B. For training configurations and architecture refer to \cref{sec: Model architecture}. We use prototypical tasks from LM-Harness (see \cref{subsubsection: short-context tasks} for list of tasks) to evaluate language modeling capabilities of \ourshortname and compare with baseline SSM/fading memory layers. \cref{table:ssm_lm_harness_comparison_model_params} shows that at 440M scale, \ourshortname is competitive with GDN and Deltanet. However, differences emerge at larger scales, with \ourshortname showing increasing benefits. In particular, the retrieval capabilities of our model, as measured by FDA and SWDE consistently outperform all SSM baselines at 1B and 2.8B scale. We also report the results of equal-sized Transformer for completeness, which serves as a performance ceiling at each scale.

\begin{table}[htbp]
\centering
\caption{\textbf{\ourshortname shows stronger scaling with compute that other SSM baseline models.} LM-Harness results for models at different scales: 440M, 1B and 2.8B. All models were trained from scratch. 440M and 1B models were trained on 8B and 20B tokens respectively in accordance to the Chinchila scaling laws \cite{hoffmann2022empirical}. For the 2.8B model we trained on 100B tokens.}
\label{table:ssm_lm_harness_comparison_model_params}
\small
\resizebox{0.9\textwidth}{!}{
\begin{tabular}{l|cccccccccc|c}
\toprule
\textbf{Model} & \textbf{\begin{tabular}{@{}c@{}}ARC-C\end{tabular}} & \textbf{\begin{tabular}{@{}c@{}}ARC-E\end{tabular}} & \textbf{\begin{tabular}{@{}c@{}}BoolQ\end{tabular}} & \textbf{\begin{tabular}{@{}c@{}}COPA\end{tabular}} & \textbf{\begin{tabular}{@{}c@{}}HellaSWAG\end{tabular}} & \textbf{\begin{tabular}{@{}c@{}}PIQA\end{tabular}} & \textbf{\begin{tabular}{@{}c@{}}SciQ\end{tabular}} & \textbf{\begin{tabular}{@{}c@{}}Winogrande\end{tabular}} & \textbf{\begin{tabular}{@{}c@{}}FDA\end{tabular}} & \textbf{\begin{tabular}{@{}c@{}}SWDE\end{tabular}} & \textbf{\begin{tabular}{@{}c@{}}Avg\end{tabular}} \\
& acc\_n $\uparrow$ & acc\_n  $\uparrow$ & acc $\uparrow$ & acc $\uparrow$ & acc\_n $\uparrow$ & acc\_n $\uparrow$ & acc\_n $\uparrow$ & acc $\uparrow$ & contains $\uparrow$ & contains $\uparrow$ & \\
\midrule
\midrule
\multicolumn{12}{c}{\textit{440M Models}} \\
\midrule
Transformer & 24.40 & \underline{42.26} & \underline{59.88} & 70.00 & 36.19 & 64.15 & 61.50 & \textbf{51.70} & \textbf{5.17} & \textbf{35.64} & \textbf{45.09} \\
Gated Linear Attention & 24.06 & 40.28 & 56.57 & \underline{71.00} & 32.70 & 62.24 & 57.80 & 50.67 & 1.00 & 9.18 & 40.55 \\
Gated DeltaNet & \textbf{25.17} & 41.96 & 58.23 & \textbf{72.00} & 36.96 & \textbf{64.69} & \underline{63.6} & \textbf{51.7} & 1.91 & 11.88 & \underline{42.81} \\
DeltaNet & \underline{25.09} & 41.92 & \textbf{61.13} & 65.00 & \underline{37.20} & \underline{64.47} & \textbf{64.00} & 49.49 & \underline{2.81} & \underline{14.31} & 42.54 \\
Gated KalmaNet (Ours) & 24.57 & \textbf{43.22} & 56.94 & \underline{71.00} & \textbf{37.22} & \underline{64.47} & 62.8 & 50.83 & 1.45 & 14.04 & 42.65 \\
\midrule
\multicolumn{12}{c}{\textit{1B Models}} \\
\midrule
Transformer & 26.62 & 46.42 & 59.94 & \textbf{77.00} & 44.01 & 67.14 & \textbf{68.30} & 54.06 & \textbf{8.35} & \textbf{45.18} & \textbf{49.70} \\
Mamba2 & \textbf{28.07} & \underline{46.63} & \underline{60.21} & 70.00 & \underline{44.57} & 67.57 & 65.50 & \underline{54.30} & 1.45 & 15.75 & 45.40 \\
Gated Linear Attention & 25.94 & 42.00 & 58.84 & 70.00 & 36.34 & 63.60 & 58.20 & 51.85 & 1.45 & 10.53 & 41.88 \\
Gated DeltaNet & 27.05 & \textbf{47.98} & 59.54 & \underline{74.00} & 44.27 & 67.36 & 66.2 & 53.83 & 2.18 & 17.82 & 46.02 \\
DeltaNet & \underline{27.56} & 46.25 & 59.97 & 71.00 & 43.18 & \underline{67.74} & 65.90 & \textbf{55.41} & 3.09 & 20.61 & 46.07 \\
Gated KalmaNet (Ours) & 25.43 & 46.55 & \textbf{60.73} & \underline{74.00} & \textbf{44.59} & \textbf{68.88} & \underline{67.60} & 52.41 & \underline{6.17} & \underline{21.87} & \underline{46.82} \\
\midrule
\multicolumn{12}{c}{\textit{2.8B Models}} \\
\midrule
Transformer & 32.25 & 56.10 & \textbf{64.28} & 80.00 & 60.96 & 73.56 & 79.50 & 61.72 & \textbf{58.53} & \textbf{72.28} & \textbf{63.92} \\
Mamba2 & 32.24 & 59.64 & 58.72 & \underline{82.00} & 62.23 & 73.78 & 79.80 & 62.19 & 7.71 & 41.13 & 55.94 \\
Gated Linear Attention & 27.82 & 50.80 & 52.57 & 78.00 & 48.83 & 70.13 & 69.60 & 54.54 & 2.81 & 20.43 & 47.55 \\
Gated DeltaNet & \underline{32.59} & \textbf{60.02} & \underline{62.75} & \underline{82.00} & \underline{62.8} & \underline{74.32} & \underline{80.6} & \underline{62.35} & 8.26 & 44.28 & 57.00 \\
DeltaNet & \textbf{32.85} & 58.16 & 42.51 & 81.00 & 61.13 & 73.78 & 43.90 & 61.72 & 11.80 & 46.08 & 51.29 \\
Gated KalmaNet (Ours) & 32.51 & \underline{59.89} & 61.68 & \textbf{85.00} & \textbf{63.84} & \textbf{74.81} & \textbf{83.2} & \textbf{64.17} & \underline{12.89} & \underline{50.95} & \underline{58.89} \\
\bottomrule
\end{tabular}
}
\end{table}

\newpage

\section{Ablations}
\label{appendix: ablations}
In this section we consider ablations for various modeling choices made in arriving at our final \ourshortname model. For all ablations, we consider 2.8B models trained on 100B tokens on DCLM at 4K context length (unless mentioned otherwise). We use the same architecture and training configurations for these ablations as mentioned in \cref{sec: Model architecture}. 

\subsection{Does Adaptive Regularization Help?} 
As discussed in \cref{subsection:CH-adaptive-reg}, we introduced adaptive regularization to control the condition number of $H_T + \lambda_tI$ for numerical stability. Here we ablate this choice, specifically we compare the following runs.
\begin{enumerate}
    \item \textit{Adaptive regularization}. We train a model with $\lambda_t = a||H_t||_F$. We report results for $a=0.02$ for this run.
    \item \textit{Constant regularization} We train same model architecture (as above) with $\lambda_t = 0.25$ (a constant). This choice of $0.25$ is motivated from concurrent work \cite{Von-arXiv2025-mesa} which explored a similar ridge regression objective for LLM training. 
\end{enumerate}

As shown in \cref{fig:adaptive reg. ablation}, without strict condition number control, gradient norms spike during training, leading to increased cross entropy loss (compared to the run with adaptive regularization).

\begin{figure}[h]
    \centering
    \includegraphics[width=0.48\textwidth]{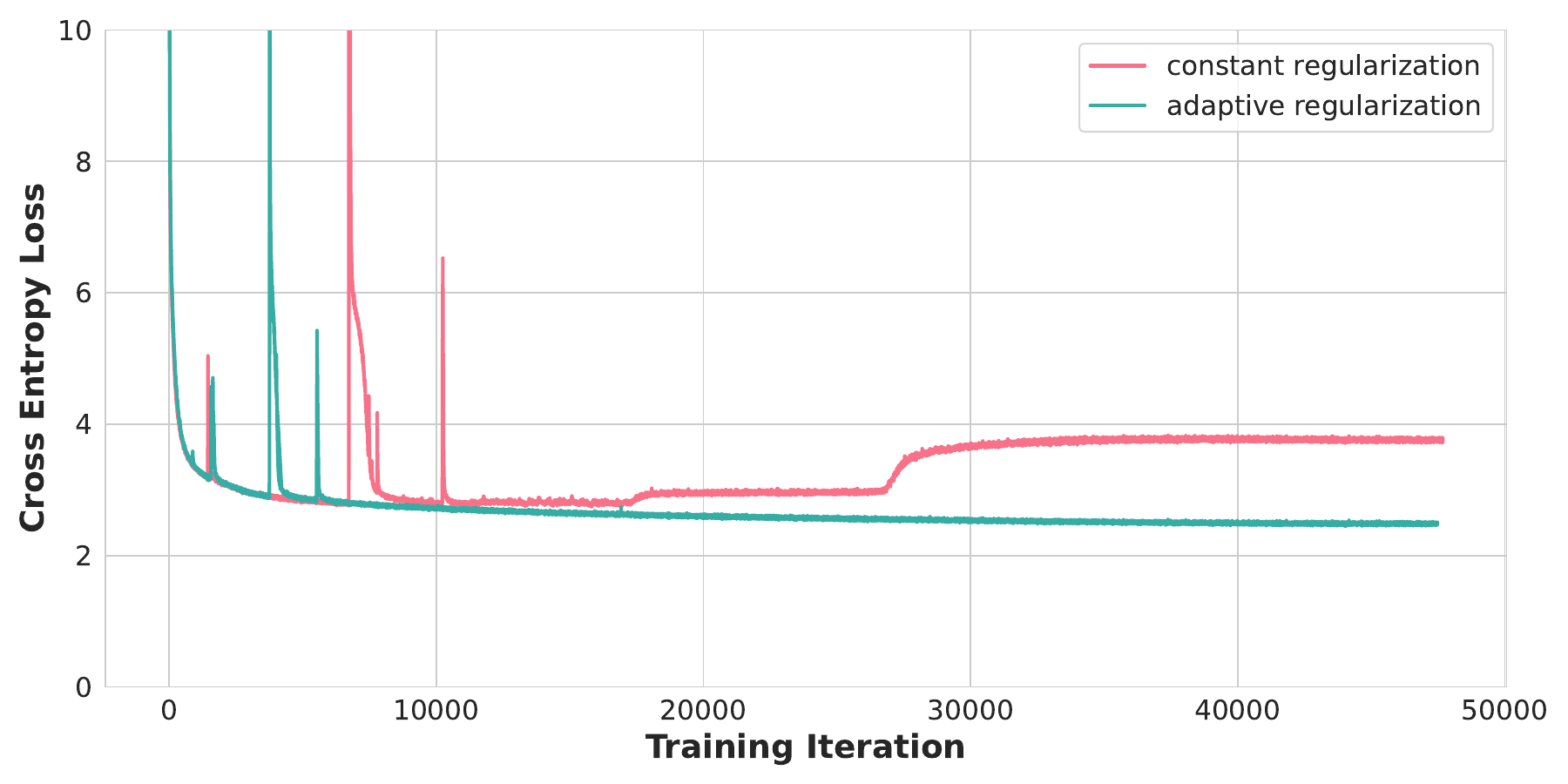}
    \includegraphics[width=0.48\textwidth]{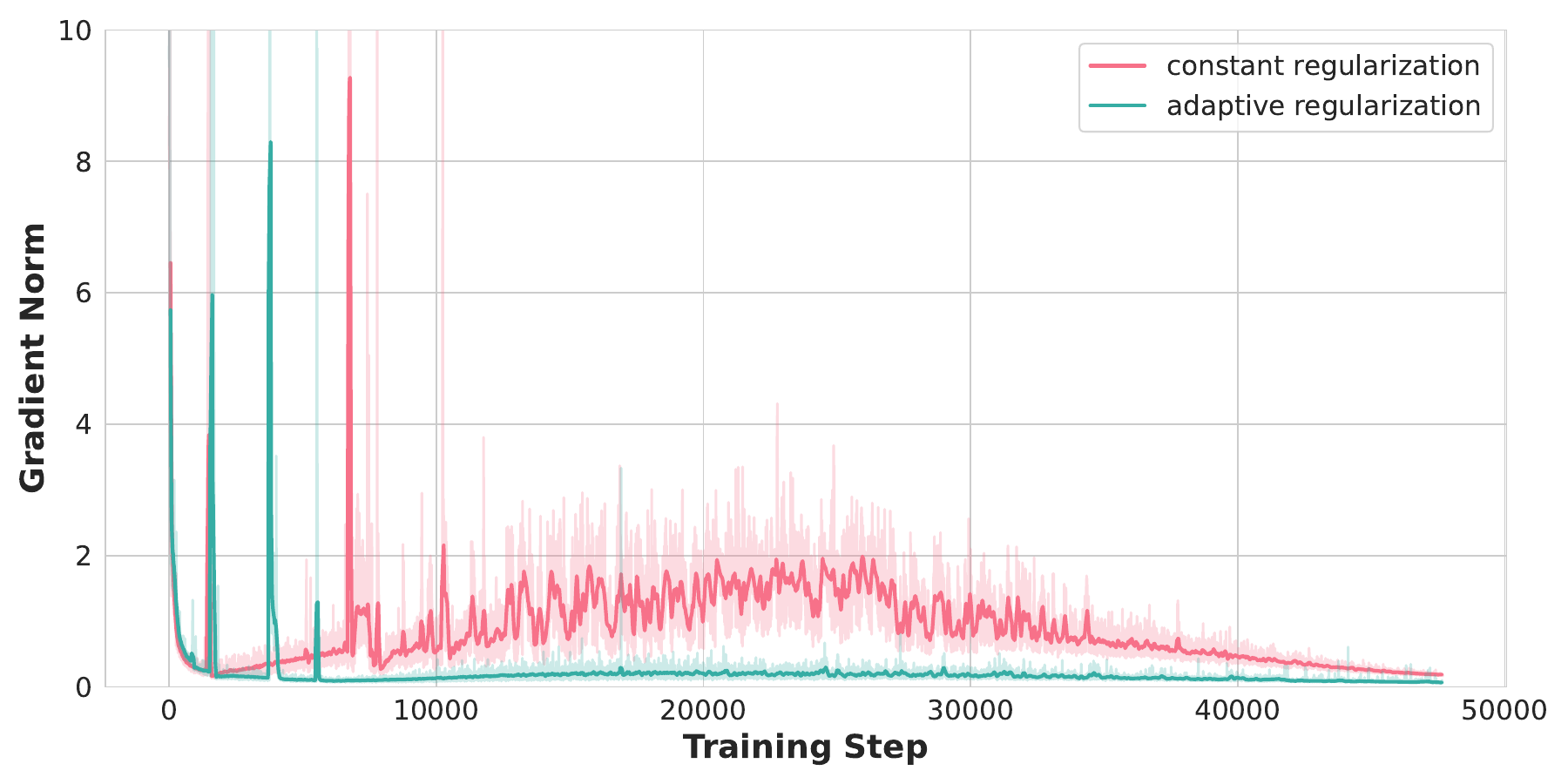}
    
    \caption{\textbf{Adaptive regularization results in smoother and better training curves.} (a) Plots the training curve for 2.8B models on 100B tokens from DCLM. (b) Plots the corresponding gradient norm. The model with constant regularization (red curve) results in a higher loss that can be attributed to its non-smooth trajectory over the course of its training run (spiky gradient norms).}
    \label{fig:adaptive reg. ablation}
    \vspace{-0.3cm}
\end{figure}

\subsection{Does Adaptive Weighting Help?} 
In \cref{subsection:CH-adaptive-reg}, we discussed increasing the expressivity of our layer by introducing adaptive weights $\eta_{t,i}$ which re-weigh the past to be exponentially decaying in time. Given constant-sized memory, we hypothesize this adaptive weighting (gating) allows \ourshortname to learn an effective representation by incorporating recency bias into its computation. In this subsection we test this hypothesis. We carry out the following runs.
\begin{enumerate}
    \item \textit{Adaptive weighting (gating)}. We train a model with adaptive weights. Specifically, for all $t\geq i$, we parameterize the weight for the $i^{\textrm{th}}$ sample at time-step $t$ as $\eta_{t,i} = \prod_{j = i+1}^t \gamma_j$, with each $\gamma_j \in [0,1]$ learnable.

    \item \textit{No weighting}. We train the same model architecture as above, but with no weights. This essentially results in an unweighted ridge regression objective obtained by setting $\eta_{i} = 1$ in \cref{eq:rls}. 
\end{enumerate}

\cref{table:gating_ablation} shows clear benefits of adapting weighting with improvements across the board on all LM-Harness tasks considered, thereby validating our hypothesis.

\begin{table*}[htbp]
\centering
\caption{\textbf{Adaptive weighting outperforms across the board on LM-Harness tasks.} Results for 2.8B models trained on 100B tokens from DCLM with and without adaptive weights as introduced in \cref{subsection:CH-adaptive-reg}.}
\label{table:gating_ablation}
\small
\resizebox{0.9\textwidth}{!}{
\begin{tabular}{c|cccccccccc|c}
\toprule
\textbf{Adaptive Weights} & \textbf{\begin{tabular}{@{}c@{}}ARC-C\end{tabular}} & \textbf{\begin{tabular}{@{}c@{}}ARC-E\end{tabular}} & \textbf{\begin{tabular}{@{}c@{}}BoolQ\end{tabular}} & \textbf{\begin{tabular}{@{}c@{}}COPA\end{tabular}} & \textbf{\begin{tabular}{@{}c@{}}HellaSWAG\end{tabular}} & \textbf{\begin{tabular}{@{}c@{}}PIQA\end{tabular}} & \textbf{\begin{tabular}{@{}c@{}}SciQ\end{tabular}} & \textbf{\begin{tabular}{@{}c@{}}Winogrande\end{tabular}} & \textbf{\begin{tabular}{@{}c@{}}FDA\end{tabular}} & \textbf{\begin{tabular}{@{}c@{}}SWDE\end{tabular}} & \textbf{\begin{tabular}{@{}c@{}}Avg\end{tabular}} \\
& acc\_n $\uparrow$ & acc\_n  $\uparrow$ & acc $\uparrow$ & acc $\uparrow$ & acc\_n $\uparrow$ & acc\_n $\uparrow$ & acc\_n $\uparrow$ & acc $\uparrow$ & contains $\uparrow$ & contains $\uparrow$ & \\
\midrule
\midrule
\xmark & 28.24 & 51.73 & 57.68 & 76 & 53.87 & 71.87 & 71.6 & 54.38 & 6.08 & 33.03 & 50.45 \\
\cmark & \textbf{32.51} & \textbf{59.89} & \textbf{61.68} & \textbf{85} & \textbf{63.84} & \textbf{74.81} & \textbf{83.2} & \textbf{64.17} & \textbf{12.89} & \textbf{50.95} & \textbf{58.89} \\
\bottomrule
\end{tabular}
}
\end{table*}

\subsection{Does $\alpha$-connection Improve Training of \ourshortname?} \label{appendix: alpha connection ablation}
In \cref{subsection:GKA-block}, we introduce the $\alpha$-connection as a residual connection that establishes a direct path for gradient flow through the GLA solution, improving training stability. This allows the model to fall back on the GLA solution when CH produces poor-quality results due to non-convergence of the iterative solver within the fixed iteration budget. To validate this design choice, we perform two runs.

\begin{enumerate}
    \item[R1.] \textit{with $\alpha$-connection}. We train a model with the $\alpha$-connection as shown in our GKA block in \cref{fig:GKA-block}.

    \item[R2.] \textit{without $\alpha$-connection}. We train the same model architecture as above, but with no $\alpha$ connection. This can be simply understood as setting $\alpha_t = 1$ for all time-steps $t$ in \cref{fig:GKA-block}. 
\end{enumerate}

On LM-Harness, both models perform similarly, with R1 and R2 achieving aggregate scores of 58.89 and 58.39, respectively. However, clear differences emerge under long-context evaluation, where we trained both models on an additional 25B tokens from long documents at 128K context length. \cref{fig:alpha_ablation} shows that \ourshortname without the $\alpha$-connection exhibits inferior long-context performance on average, with Synthetic Recall and LongQA showing major degradation.

\begin{figure}[h]
    \centering
    \includegraphics[width=1.0\textwidth]{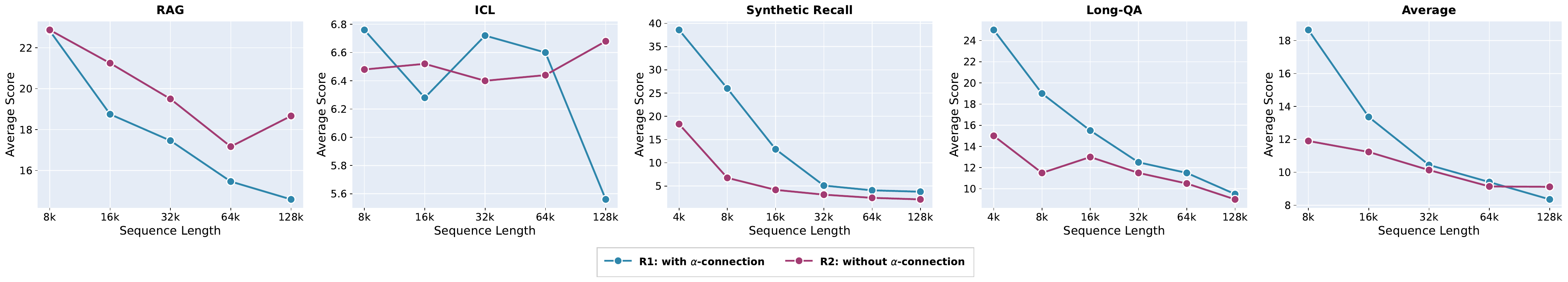}
    \caption{\textbf{\ourshortname without the $\alpha$- connection severaly underperforms on Synthetic Recall and LongQA.} On ICL all SSMs struggle to perform better than random chance (see \cref{fig:longctx_ssm}). Interestingly, although R2 exhibits poorer long-context abilities in aggregate, it outperforms R1 on RAG by a few points.}
    \label{fig:alpha_ablation}
    \vspace{-6pt}
\end{figure}

\newpage

\newpage
\section{Effects of Different Regularization Strengths}
\label{appendix: ablation-reg-strength}
Recall that we proposed setting adaptive regularization $\lambda_t=a \cdot \|H_t \|_{\text{F}}$. We now present experiments validating this choice. 

\myparagraph{Synthetic Experiments} First, we generate data as per \cref{fig:cgch-grad}a, where the covariance matrix is normalized by its Frobenius norm. In this case we set $\lambda_t = a$ for $a$ varying in $\{0.01, 0.02, 0.05, 0.1\}$. \cref{fig:convergence-ch} shows that the \textit{maximum regularized residual norm} (computed as the maximum of $\| (H_t+ \lambda_t I) \xi_i - q \|_2$ over all dimensions where $\xi_{i}$ is the estimate of CH at iteration $i$) decreases as we enlarge $\lambda_t$. This is because having a large $\lambda_t$ reduces the condition number. The downside, though, with a large $\lambda_t$ is that it reduces the memorization capacity, namely, it might enlarge $\| H_t \xi_i - q \|_2$, the true residual of interest.
\begin{figure}[h]
    \centering
    \includegraphics[width=0.24\textwidth]{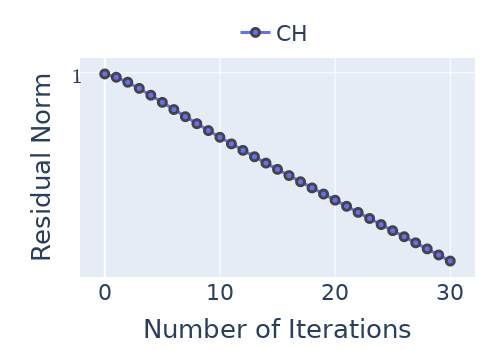}
    \includegraphics[width=0.24\textwidth]{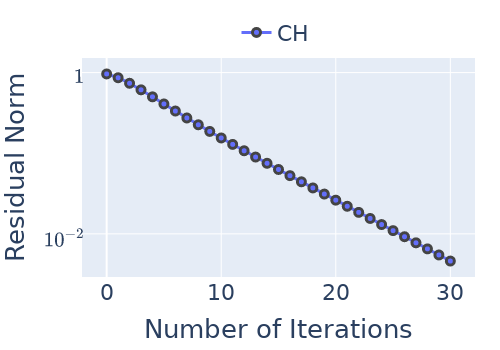}
    \includegraphics[width=0.24\textwidth]{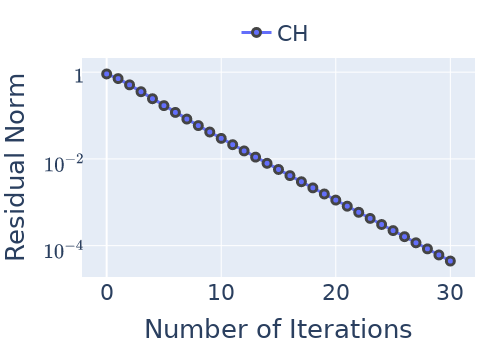}
    \includegraphics[width=0.24\textwidth]{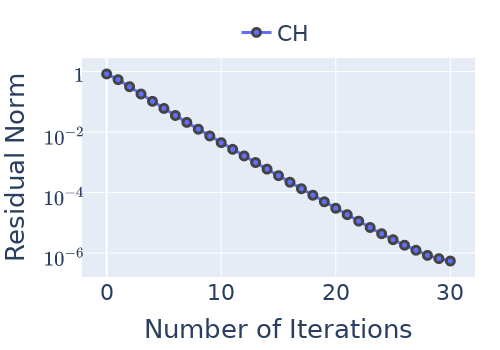}

    \makebox[0.24\textwidth]{\footnotesize \quad \quad  (a) $\lambda_t=0.01$}
    \makebox[0.24\textwidth]{\footnotesize \quad \quad  (b) $\lambda_t=0.02$}
    \makebox[0.24\textwidth]{\footnotesize \quad \quad  (c) $\lambda_t=0.05$}
    \makebox[0.24\textwidth]{\footnotesize \quad \quad  (d) $\lambda_t=0.1$} 


    
    \caption{Convergence for varying regularization strengths (batch size $8$, sequence length 2048, 8 heads, and head dimension 128). }
    \label{fig:convergence-ch}
\end{figure}

\myparagraph{\ourshortname with different regularization strengths} We train several 2.8B models with varying regularization strength by choosing $a \in [0.01, 0.02, 0.05, 0.1]$. While performance on LM-Harness (\cref{table:ablation_over_a}) shows little discrepancy, we observe noticeable differences in long-context performance—where memorization capacity matters most—(\cref{fig:lambda_ablation}). Specifically, the long-context performance of \ourshortname improves initially as we decrease $a$ from $0.1 \to 0.05$. This is expected since this increases the memorization capacity of the model. However, decreasing further from $0.05 \to 0.02 \to 0.01$ causes performance to decrease. This can be attributed to the increasing condition number of the problem, which reduces the quality of the solution computed by CH (\cref{fig:convergence-ch}).

\begin{figure}[h]
    \centering
    \includegraphics[width=1.0\textwidth]{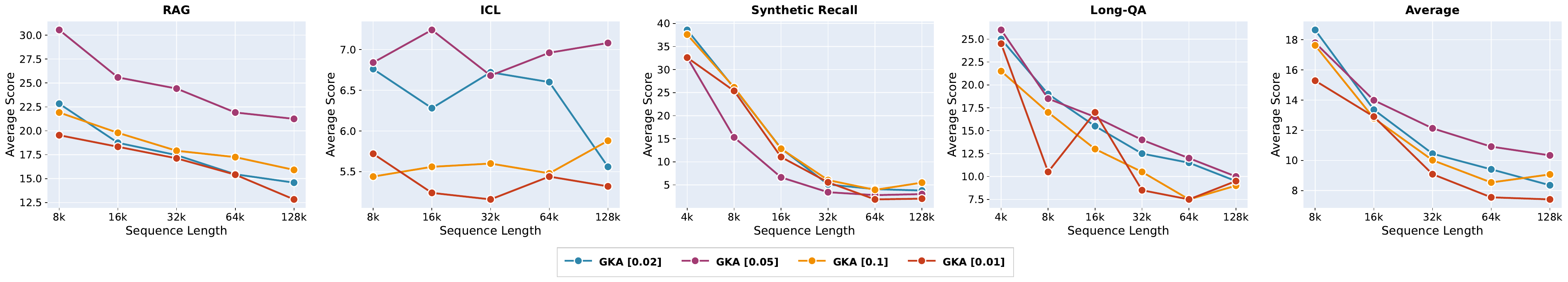}
    \caption{\textbf{Long context performance \ourshortname for different regularization strengths.} The long-context performance of \ourshortname improves initially as we decrease $a$ from $0.1 \to 0.05$. This is expected since this increases the memorization capacity of the model. However, decreasing further from $0.05 \to 0.02 \to 0.01$ causes performance to decrease. This can be attributed to the increasing condition number of the problem, which reduces the quality of the solution computed by CH (\cref{fig:convergence-ch})}
    \label{fig:lambda_ablation}
\end{figure}

\begin{table*}[htbp]
\centering
\caption{\textbf{Ablation over different choices of regularization strength $\lambda_t = a\cdot \| H_t\|_{\text{F}}$.} Short-context performance on LM-Harness shows little discrepancy with different regularization strengths.}
\label{table:ablation_over_a}
\small
\resizebox{0.9\textwidth}{!}{
\begin{tabular}{l|cccccccccc|c}
\toprule
\textbf{$a$} & \textbf{\begin{tabular}{@{}c@{}}ARC-C\end{tabular}} & \textbf{\begin{tabular}{@{}c@{}}ARC-E\end{tabular}} & \textbf{\begin{tabular}{@{}c@{}}BoolQ\end{tabular}} & \textbf{\begin{tabular}{@{}c@{}}COPA\end{tabular}} & \textbf{\begin{tabular}{@{}c@{}}HellaSWAG\end{tabular}} & \textbf{\begin{tabular}{@{}c@{}}PIQA\end{tabular}} & \textbf{\begin{tabular}{@{}c@{}}SciQ\end{tabular}} & \textbf{\begin{tabular}{@{}c@{}}Winogrande\end{tabular}} & \textbf{\begin{tabular}{@{}c@{}}FDA\end{tabular}} & \textbf{\begin{tabular}{@{}c@{}}SWDE\end{tabular}} & \textbf{\begin{tabular}{@{}c@{}}Avg\end{tabular}} \\
& acc\_n $\uparrow$ & acc\_n  $\uparrow$ & acc $\uparrow$ & acc $\uparrow$ & acc\_n $\uparrow$ & acc\_n $\uparrow$ & acc\_n $\uparrow$ & acc $\uparrow$ & contains $\uparrow$ & contains $\uparrow$ & \\
\midrule
\midrule
0.01 & \textbf{33.45} & 58.63 & 62.63 & \textbf{85.00} & 63.36 & 73.99 & 81.40 & 63.14 & 11.16 & \textbf{51.49} & 58.43 \\
0.02 & 32.51 & 59.89 & 61.68 & \textbf{85.00} & 63.84 & 74.81 & \textbf{83.20} & \textbf{64.17} & \textbf{12.89} & 50.95 & \textbf{58.89} \\
0.05 & 32.68 & \textbf{61.66} & 53.57 & 79.00 & 63.46 & \textbf{74.84} & 82.60 & 63.77 & 11.98 & 49.68 & 57.32 \\
0.1 & 32.76 & 59.85 & \textbf{63.52} & 84.00 & \textbf{63.95} & \textbf{75.08} & \textbf{83.20} & 63.54 & 11.43 & 51.22 & 58.86 \\
\bottomrule
\end{tabular}
}
\end{table*}

\newpage

\begin{table}[]
\caption{
\textbf{Latent sketching increases training throughput (by up to 10\%) while marginally reducing accuracy (< 1\%).}
Training throughput is reported in \# Billion tokens/day/node. It is measured on a single H200 GPU with a batch size of 1M tokens. Our results indicate minimal regression on LM-harness tasks but up to 10\% improvement in training throughput (going from no-sketch to sketch dim 32). However, long context performance is adversely affected with sketching with up to 60\% relative drop in performance. Future work will address this by exploring the use of sketching adaptively depending on the "complexity" of the task.}
    \centering
    \scalebox{0.9}{
    \begin{tabular}{lccc}
        \toprule
        Sketch dimension & LM-Harness avg. & Training throughput \\
       \midrule 
        32 & 57.57 & 8.37 \\
        \midrule 
        no-sketch  & 58.89 & 7.65 \\
        \bottomrule 
    \end{tabular}
    }
    \label{table:PLS results}
\end{table}

\section{Latent Sketching for Approximate Solutions}
\label{appendix: sketching}
We introduce the idea of sketching from random matrix theory to further control the amount of FLOPs vs accuracy in \ourshortname. 
Sketching involves down projecting the normal equations into a low-dimensional subspace, solving the equations in this subspace and finally up-projecting the solution back to the original space. This reduces the worst-case computational complexity of our approach from $\mathcal{O}(D^2 r)$ to $\mathcal{O}(d^2 r)$, where $d \ll D$ and $r$ is the number of iterations in \cref{algo:Chebyshev}. 
To the best of our knowledge our work is the first one introducing sketching as a viable solution to increase efficiency of neural network layers that are defined implicitly by the solution to an optimization problem. Sketching can be thought of as an analogous to the Multi Latent Attention idea introduced by DeepSeek but applied to fading memory layers. \cref{table:PLS results} shows preliminary results of this idea applied to \ourshortname. Both models (no-sketch and sketch dim 32) are trained from scratch at 2.8B scale on 100B tokens.

\section{Hybrid Gated KalmaNet}
\label{appendix: hybrid}
As discussed in \cref{section:appendix_hybrid_intro}, augmenting SSM models with Attention layers has proven to be an effective way of improving performance on tasks that require recalling information from the distant past. In this section, we show that our Gated KalmaNet layer can be interleaved with Attention layers to yield even stronger models. Our Hybrid GKA model is based on the Qwen3 architecture \cite{yang2025qwen3}. Namely, our Hybrid model consists of a stack of ``decoder'' blocks, each of which contains a sequence mixer---either Attention or GKA---followed by an MLP. Similar to Qwen3, our Attention layers use QK normalization layers. Our Hybrid model consists of 30 decoder blocks, 26 of which use GKA as the sequence mixer, and 4 that use Attention. The Attention decoder blocks are at indices 6, 14, 22, and 29. Our Hybrid models follow the same training procedure as our non-Hybrid models. Specifically, we pretrain our Hybrid model on 100B tokens with a 4K context size, followed by fine-tuning on 25B tokens at a 128K context size. 

When evaluating our pretrained Hybrid model standard NLP benchmarks, we observe that it improves substantially on recall-oriented tasks (FDA \& SWDE) compared to the non-Hybrid model\footnote{Note, our non-hybrid model shares the same architecture as the hybrid with the distinction that all 4 Attention layers are replaced with GKA layers.}, as shown in \cref{table:hybrid_vs_pure_ssm_lm_harness}. Further, when evaluating our fine-tuned long-context model on tasks that require effective modeling of long-range dependencies, we observe a significant improvement across all context lengths, as shown in \cref{fig:hybrid_vs_ssm_helmet}. 

\begin{table*}[htbp]
\centering
\caption{\textbf{Our Hybrid GKA + Attention model improves language modeling performance.} When interleaving Attention layers into our GKA models, we observe a significant improvement on recall-oriented tasks, such as FDA and SWDE, while preserving a similar performance on short-context tasks.}
\label{table:hybrid_vs_pure_ssm_lm_harness}
\small
\resizebox{0.9\textwidth}{!}{
\begin{tabular}{l|cccccccccc|c}
\toprule
\textbf{Model} & \textbf{\begin{tabular}{@{}c@{}}ARC-C\end{tabular}} & \textbf{\begin{tabular}{@{}c@{}}ARC-E\end{tabular}} & \textbf{\begin{tabular}{@{}c@{}}BoolQ\end{tabular}} & \textbf{\begin{tabular}{@{}c@{}}COPA\end{tabular}} & \textbf{\begin{tabular}{@{}c@{}}HellaSWAG\end{tabular}} & \textbf{\begin{tabular}{@{}c@{}}PIQA\end{tabular}} & \textbf{\begin{tabular}{@{}c@{}}SciQ\end{tabular}} & \textbf{\begin{tabular}{@{}c@{}}Winogrande\end{tabular}} & \textbf{\begin{tabular}{@{}c@{}}FDA\end{tabular}} & \textbf{\begin{tabular}{@{}c@{}}SWDE\end{tabular}} & \textbf{\begin{tabular}{@{}c@{}}Avg\end{tabular}} \\
& acc\_n $\uparrow$ & acc\_n  $\uparrow$ & acc $\uparrow$ & acc $\uparrow$ & acc\_n $\uparrow$ & acc\_n $\uparrow$ & acc\_n $\uparrow$ & acc $\uparrow$ & contains $\uparrow$ & contains $\uparrow$ & \\
\midrule
\midrule
Gated KalmaNet (Hybrid) & \textbf{33.02} & 59.47 & \textbf{64.07} & 80.00 & 62.74 & 74.59 & 81.40 & \textbf{64.64} & \textbf{53.18} & \textbf{72.46} & \textbf{64.56} \\
Gated KalmaNet & 32.51 & \textbf{59.89} & 61.68 & \textbf{85.00} & \textbf{63.84} & \textbf{74.81} & \textbf{83.20} & 64.17 & 12.89 & 50.95 & 58.89 \\
\bottomrule
\end{tabular}
}
\end{table*}

\begin{figure}[h]
    \centering
    \includegraphics[width=1.0\textwidth]{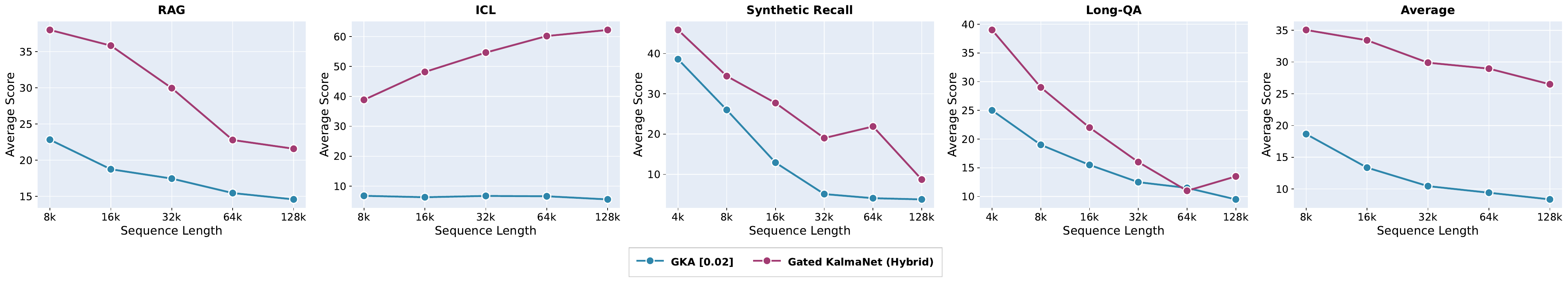}
    \caption{\textbf{Our Hybrid GKA + Attention model significantly improves performance across all long-context benchmarks compared to our non-Hybrid model.} Adding a few Attention layers to our GKA model improves long-range dependency modeling, improving performance across all sequence lengths on RAG, ICL, Synthetic Recall, and Long-QA.}
    \label{fig:hybrid_vs_ssm_helmet}
\end{figure}

\newpage

\section{GKAVision architecture}\label{sec: GKAVision}
We obtain GKAVision by replacing the MambaMixer blocks in the MambaVision architecture \cite{hatamizadeh2025mambavision} with \ourshortname. For the GKA layer, we use the following hyperparameters: 
\begin{enumerate} 
\item We employ convolution kernels of size 3 (same as MambaVision). 
\item We use an expansion factor of 0.5 (\texttt{expand\_k} and \texttt{expand\_v} parameters), which shrinks the input to the layer. This choice was made to ensure comparable model size with MambaVision. 
\item We use a head dimension of 16. This was chosen to maximize the number of heads while ensuring the head dimension does not fall below the minimum requirements of Triton kernels.
\item We added an input-selectivity gate [introduced in \cite{chattopadhyay2026priming}], $\beta$, to the GKA recurrence which we found to be helpful for boosting performance on ImageNet. Specifically, \eqref{eq:ch-forward} is modified to 
\begin{equation}\label{eq:ch-forward-beta}
    \begin{split}
        H_t&= \gamma_t \cdot  H_{t-1} + \beta_tk_t k_t^\top, \ \  U_t = \gamma_t \cdot U_{t-1} + \beta_t v_t k_t^\top, \\
        y_t &= U_t \hat{x}_t, \quad \hat{x}_t = \text{CH}(H_t+\lambda_t I, q_t, r),
    \end{split} 
\end{equation}

where $\beta_t$ is a per-head scalar modelled as a linear projection of the input followed by a sigmoid activation function. In our ablation study, adding this $\beta$ gate improved ImageNet Top-1 accuracy from 81.14 $\to$ 81.27.
\end{enumerate}

Remaining hyperparameters ($\lambda$ and number of CH iterations) are the same as described in \cref{sec: Model architecture}. For training all models on ImageNet, we used the AdamW optimizer with cosine scheduling, a max LR of 5e-4, and weight decay of 0.05. All other training parameters follow the defaults from the official MambaVision \href{https://github.com/NVlabs/MambaVision?tab=readme-ov-file}{repository}.

\end{document}